\PassOptionsToPackage{table, xcdraw}{xcolor}
\documentclass{article} 
\usepackage{iclr2021_conference,times}
\iclrfinalcopy 

\usepackage{amsmath,amsfonts,bm}









\def\eqref#1{equation~\ref{#1}}









\def\1{\bm{1}}










\DeclareMathAlphabet{\mathsfit}{\encodingdefault}{\sfdefault}{m}{sl}
\SetMathAlphabet{\mathsfit}{bold}{\encodingdefault}{\sfdefault}{bx}{n}













\usepackage[utf8]{inputenc} 
\usepackage[T1]{fontenc}    
\usepackage{hyperref}       
\usepackage{url}            
\usepackage{booktabs}       
\usepackage{amsfonts}       
\usepackage{nicefrac}       
\usepackage{microtype}      

\usepackage{amsthm, amssymb, amsmath}
\usepackage{verbatim, float}
\usepackage{mathrsfs}
\usepackage{graphics}
\usepackage{subfig}
\usepackage[usenames,dvipsnames]{pstricks}
\usepackage{multirow}
\usepackage{url}
\usepackage{array}
\usepackage{tabu}
\usepackage{epsfig}
\usepackage{parallel}
\usepackage{graphicx}
\usepackage{diagbox}
\usepackage{mathtools}
\usepackage[absolute,overlay]{textpos}
\usepackage{wrapfig}
\usepackage[linesnumbered,vlined,ruled,commentsnumbered]{algorithm2e}

\theoremstyle{definition}
\newtheorem{theorem}{Theorem}[section]
\newtheorem{proposition}[theorem]{Proposition}

\theoremstyle{remark}

\newcommand{\hide}[1]{}
\newcommand{\pan}[1]{{\textcolor{red}{P: #1}}}

\newcommand{\bl}[1]{{\textcolor{blue}{#1}}}
\newcommand{\caw}{CAW\xspace}
\newcommand{\cawa}{CAW-N-attn\xspace}
\newcommand{\caws}{CAW-N-mean\xspace}
\newcommand{\jure}[1]{{\color{red}[J: #1]}}
\newcommand{\xhdr}[1]{\vspace{2mm}{\noindent\bfseries #1}.}

\title{Inductive Representation Learning in Temporal Networks via Causal Anonymous Walks}


\author{Yanbang Wang$^1$\thanks{Project website with code and data: \url{http://snap.stanford.edu/caw/}. \bl{Updated 10/30/2022: we fixed a bug in previous versions. See Appendix \ref{sec:update} for details.}} , Yen-Yu Chang$^2$, Yunyu Liu$^3$, Jure Leskovec$^1$, Pan Li$^{1,3}$ \\
$^1$Department of Computer Science, $^2$Electrical Engineering, Stanford University \\
$^3$Department of Computer Science, Purdue University \\
\texttt{\{ywangdr,jure\}@cs.stanford.edu,yenyu@stanford.edu} \\
\texttt{\{liu3154,panli\}@purdue.edu}
}

%

\begin{document}

\maketitle
\vspace{-0.3cm}
\begin{abstract}
    \vspace{-0.1cm}
    Temporal networks serve as abstractions of many real-world dynamic systems. These networks typically evolve according to certain laws, such as the law of triadic closure, which is universal in social networks. Inductive representation learning of temporal networks should be able to capture such laws and further be applied to systems that follow the same laws but have not been unseen during the training stage. Previous works in this area depend on either network node identities or rich edge attributes and typically fail to extract these laws. 
    Here, we propose {\em Causal Anonymous Walks (CAWs)} to inductively represent a temporal network. CAWs are extracted by temporal random walks and work as automatic retrieval of temporal network motifs to represent network dynamics while avoiding the time-consuming selection and counting of those motifs. CAWs adopt a novel anonymization strategy that replaces node identities with the hitting counts of the nodes based on a set of sampled walks to keep the method inductive, and simultaneously establish the correlation between motifs. We further propose a neural-network model CAW-N to encode CAWs, and pair it with a CAW sampling strategy with constant memory and time cost to support online training and inference. CAW-N is evaluated to predict links over 6 real temporal networks and outperforms previous SOTA methods in the inductive setting. CAW-N also outperforms previous methods in 4 out of the 6 networks in the transductive setting.

\end{abstract}
\vspace{-0.3cm}
\section{Introduction}
\vspace{-0.3cm}
 Temporal networks consider dynamically interacting elements as nodes, interactions as temporal links, with labels of when those interactions happen. Such temporal networks provide abstractions to study many real-world dynamic systems~\citep{holme2012temporal}. 
 Researchers have investigated temporal networks in recent several decades and concluded many insightful laws that essentially reflect how these real-world systems evolve over time~\citep{kovanen2011temporal,benson2016higher,paranjape2017motifs,zitnik2019evolution}. For example, the law of triadic closure in social networks, describing that two nodes with common neighbors tend to have a mutual interaction later, reflects how people establish social connections~\citep{simmel1950sociology}. Later, a more elaborate law on the correlation between the interaction frequency between two individuals and the degree that they share social connections, further got demonstrated~\citep{granovetter1973strength,toivonen2007role}. Feedforward control loops that consist of a direct interaction (from node $w$ to node $u$) and an indirect interaction (from  $w$ through another node $v$  to  $u$), also work as a law in the modulation of gene regulatory systems~\citep{mangan2003structure} and also as the control principles of many engineering systems~\citep{gorochowski2018organization}. Although research on temporal networks has achieved the above success, it can hardly be generalized to study more complicated laws: Researchers have to investigate an exponentially increasing number of patterns when incorporating more interacting elements let alone their time-evolving aspects.
 
 Recently, representation learning, via learning vector representations of data based on neural networks, has offered unprecedented possibilities to extract, albeit implicitly, more complex structural patterns~\citep{hamilton2017representation,battaglia2018relational}. However, as opposed to the study on static networks, representation learning of temporal networks is far from mature. Two challenges on temporal networks have been frequently discussed. First, the entanglement of structural and temporal patterns requires an elegant model to digest the two-side information. 
 Second, the model scalability becomes more crucial over temporal networks as new arriving links need to be processed timely while a huge link set due to the repetitive links between two nodes needs to be digested simultaneously. 

In contrast to the above two challenges, another challenge, the inductive capability of the temporal-network representation, is often ignored. However, it is equally important if not more, as the inductive capability indicates whether the models indeed capture the dynamic laws of the systems and can be further generalized to the system that share the same laws but have not been used to train these models. 
These laws may only depend on structures such as the triadic closure or feed-forward control loops as aforementioned. These laws may also correlate with node attributes, such as interactions between people affected by their gender and age~\citep{kovanen2013temporal}. 
But in both cases, the laws should be independent from network node identities. Although previous works tend to learn inductive models by removing node identities~\citep{trivedi2019dyrep,xu2020inductive}, they run into other issues to inductively represent the dynamic laws, for which we leave more detailed discussion in Sec.~\ref{sec:related}.




\begin{figure}[t]
    \centering
    \includegraphics[width=\textwidth]{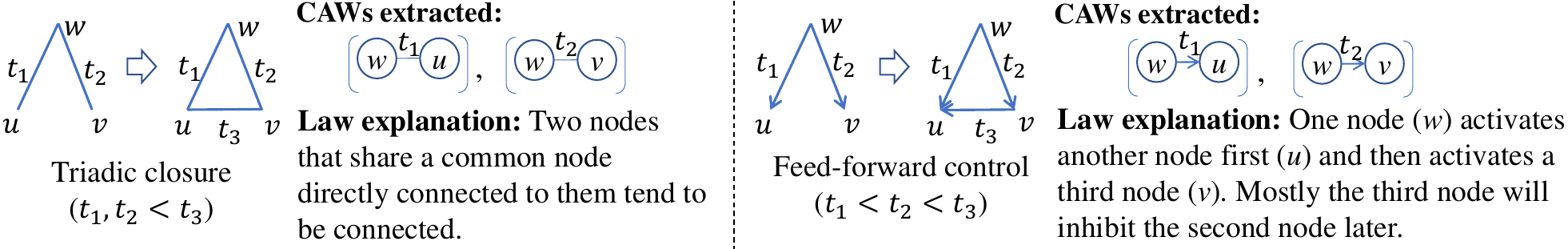}
    \vspace{-0.4cm}
    \caption{Triadic closure and feed-forward loops: Causal anonymous walks (CAW) capture the laws.}
    \vspace{-0.2cm}
    \label{fig:illu}
\end{figure}

\begin{figure}[t]
    \centering
    \includegraphics[width=\textwidth]{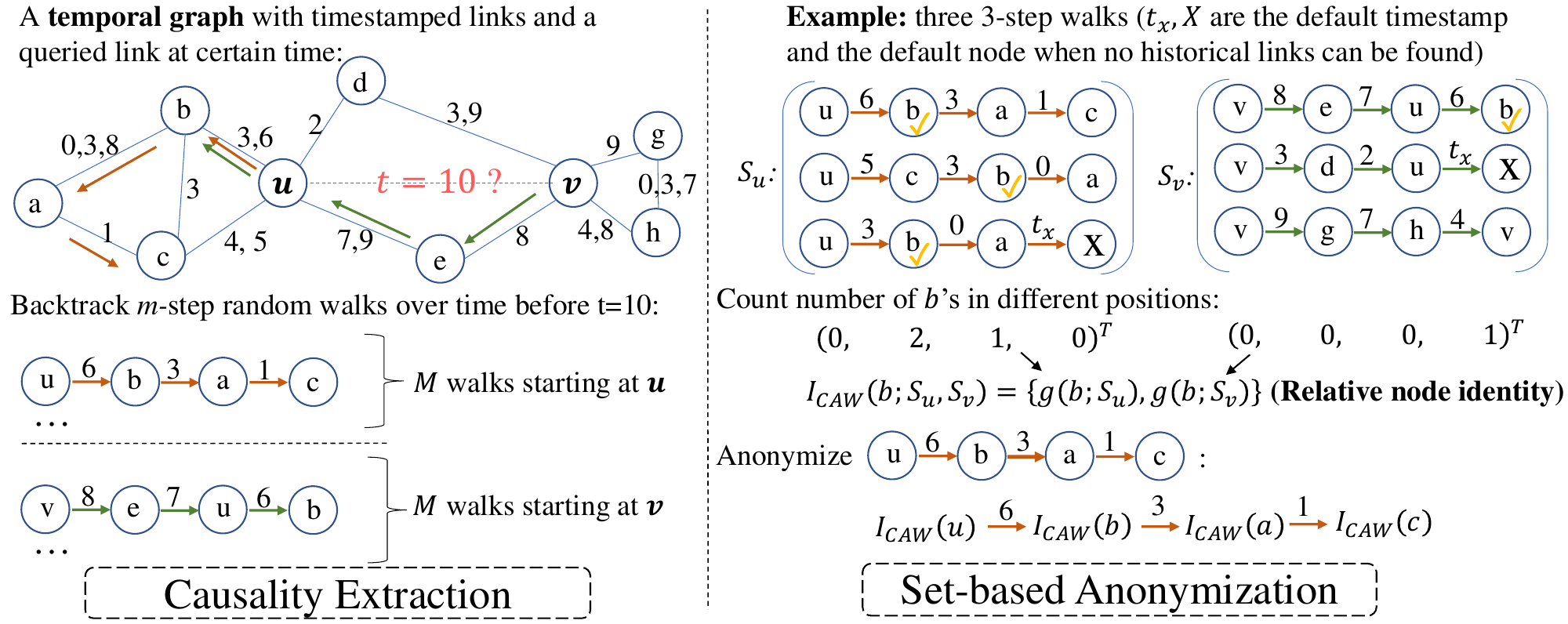}
    \vspace{-0.3cm}
    \caption{Causal anonymous walks (CAW): causality extraction and set-based anonymization.}
    \vspace{-0.5cm}
    \label{fig:CAW}
\end{figure}

Here we propose {\em \underline{Causal Anonymous Walks} (CAW)} for modeling temporal networks. Our idea for inductive learning is inspired by the recent investigation on temporal network motifs that correspond to connected subgraphs with links that appear within a restricted time range~\citep{kovanen2011temporal,paranjape2017motifs}. Temporal network motifs essentially reflect network dynamics: Both triadic closure and feed-forward control can be viewed as temporal network motifs evolving  (Fig.~\ref{fig:illu}); An inductive model should predict the 3rd link in both cases when it captures the correlation of these two links as they share a common node, while the model is agnostic to the node identities of these motifs. 

Our CAW model has two important properties (Fig.~\ref{fig:CAW}): (1) Causality extraction --- a CAW starts from a link of interest and backtracks several adjacent links over time to encode the underlying causality of network dynamics. Each walk essentially gives a temporal network motif; (2) Set-based anonymization --- CAWs remove the node identities over the walks to guarantee inductive learning while encoding {\em \underline{relative node identities}} based on the counts that they appear at a certain position according to a set of sampled walks. Relative node identities guarantee that the structures of motifs and their correlations are still kept after removing node identities. 
To predict temporal links between two nodes of interest, we propose a model CAW-Network (CAW-N) that samples a few CAWs related to the two nodes of interest, encodes and aggregates these CAWs via RNNs~\citep{rumelhart1986learning} and set-pooling respectively to make the prediction.  

Experiments show that CAW-N is extremely effective. CAW-N does not need to enumerate the types of motifs and count their numbers that have been used as features to predict network dynamics~\citep{lahiri2007structure,rahman2016link,rossi2019higher,abuoda2019link,li2017inhomogeneous}, which significantly saves feature-engineering effort. CAW-N also keeps all fine-grained temporal information along the walks that may be removed by directly counting motifs~\citep{ahmed2015efficient,paranjape2017motifs}. CAWs share a similar idea as anonymous walks (AW)~\citep{micali2016reconstructing} to remove node identities. However, AWs have only been used for entire static graph embedding~\citep{ivanov2018anonymous} and are not directedly applied to represent temporal networks: AWs cannot capture causality; AWs get anonymized based on each single walk and hence lose the correlation between network motifs. In contrast, CAWs capture all the information, temporal, structural, motif-correlation that are needed, to represent temporal networks. 




We conclude our contributions in three-folds: (1) A novel approach to represent temporal network CAW-N is proposed, which leverages CAWs to encode temporal network motifs to capture network dynamics while keeping fully inductive. CAW-N is evaluated to predict links over 6 real-world temporal networks. CAW-N outperforms all SOTA methods by about 15\% averaged over 6 networks in the inductive setting and also significantly beat all SOTA methods over 5 networks in the transductive setting; (2) CAW-N significantly decreases the feature-engineering effort in traditional motif selection and counting approaches and keeps fine-grained temporal information; (3) CAW-N is paired with a CAW sampling method with constant memory and time cost, which conduces to online learning. 

\vspace{-0.3cm}
\section{Related Work}\label{sec:related}
\vspace{-0.3cm}


Prior work on representation learning of temporal networks preprocesses the networks by simply aggregating the sequence of links within consecutive time windows into network snapshots, and use graph neural networks (GNN)~\citep{scarselli2008graph,kipf2016semi} and RNNs or transformer networks~\citep{vaswani2017attention} to encode structural patterns and temporal patterns respectively~\citep{pareja2020evolvegcn,manessi2020dynamic,goyal2020dyngraph2vec,hajiramezanali2019variational,sankar2020dysat}. The main drawback of these approaches is that they need to predetermine a time granularity for link aggregation, which is hard to learn structural dynamics in different time scales. 
Therefore, approaches that work on link streams directly have been recently proposed~\citep{trivedi2017know,trivedi2019dyrep,kumar2019predicting,xu2020inductive}. Know-E~\citep{trivedi2017know}, DyRep~\citep{trivedi2019dyrep} and JODIE~\citep{kumar2019predicting} use RNNs to propagate messages across interactions to update node representations. Know-E, JODIE consider message exchanges between two directly interacted nodes while DyRep considers an additional hop of interactions. Therefore, DyRep gives a more expressive model at a cost of high complexity. TGAT~\citep{xu2020inductive} in contrast mimics GraphSAGE~\citep{hamilton2017inductive} and GAT~\citep{velivckovic2018graph} to propagate messages in a GNN-like way from sampled historical neighbors of a node of interest.  TGAT's sampling strategy requires to store all historical neighbors, which is unscalable for online learning. Our CAW-N directly works on link streams and only requires to memorize constant many most recent links for each node. 

\begin{wrapfigure}{r}{0.5\textwidth}
    \centering
    \vspace{-0.5cm}
    \includegraphics[width=0.5\textwidth]{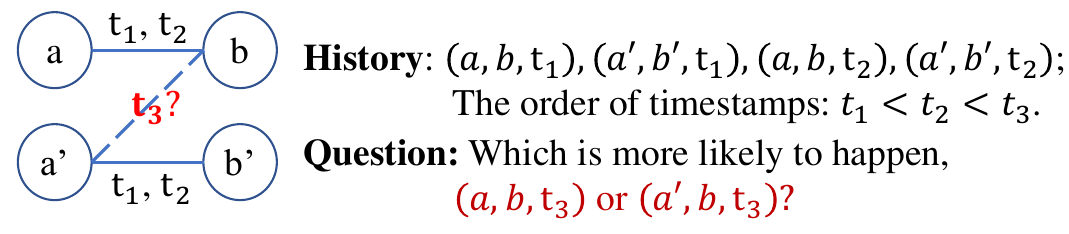}
    \vspace{-0.6cm}
    \caption{Ambiguity due to removing node identities in TGAT~\citep{xu2020inductive} ($t_1<t_2<t_3$).}
    \vspace{-0.4cm}
    \label{fig:TGAT-issue}
\end{wrapfigure}

Most of the above models are not inductive because they associate each node with an one-hot identity (or the corresponding row of the adjacency matrix, or a free-trained vector)~\citep{li2018deep,chen2019lstm, kumar2019predicting, hajiramezanali2019variational,sankar2020dysat,manessi2020dynamic,goyal2020dyngraph2vec}. TGAT~\citep{xu2020inductive} claimed to be inductive by removing node identities and just encoding link timestamps and attributes. However, TGAT was only evaluated over networks with rich link attributes, where the structural dynamics is not captured essentially: If we focus on structural dynamics only, it is easy to show a case when TGAT confuses node representations and will fail: Suppose in the history, two node pairs $\{a,b\}$ and $\{a',b'\}$ only interact within each pair but share the timestamps (Fig.~\ref{fig:TGAT-issue}). Intuitively, a proper model should predict that future links still appear within each pair. However, TGAT cannot distinguish $a$ v.s. $a'$, and $b$ v.s. $b'$, which leads to incorrect prediction. 
Note that GraphSAGE~\citep{hamilton2017inductive} and GAT~\citep{velivckovic2018graph} also share the similar issue when representing static networks for link prediction~\citep{zhang2020revisiting,srinivasan2019equivalence}. DyRep~\citep{trivedi2019dyrep} is able to relieve such ambiguity by merging node representations with their neighbors' via RNNs. However, when DyRep runs over a new network, it frequently encounters node representations unseen during its training and will fail to make correct prediction.  Our CAW-N removes node identities and leverages relative node identities to avoid the issue in Fig.~\ref{fig:TGAT-issue}. Detailed explanations are given in Sec.\ref{subsec:caw}.

Network-embedding approaches may also be applied to temporal networks~\citep{zhou2018dynamic,du2018dynamic,mahdavi2018dynnode2vec,singer2019node,nguyen2018continuous}. However, they directly assign each node with a learnable vector. Therefore, they are not inductive and cannot digest attributes.

\vspace{-0.3cm}
\section{Problem Formulation and Notations}
\vspace{-0.5cm}

\xhdr{Problem Formulation} A temporal network can be represented as a sequence of links that come in over time, \textit{i.e.} $\mathcal{E} = \{(e_1, t_1), (e_2, t_2),...\}$ where $e_i$ is a link and $t_i$ is the timestamp showing when $e_i$ arrives. Each link $e_i$ corresponds to a dyadic event between two nodes $\{v_i, u_i\}$. For simplicity, we first assume those links to be undirected and without attributes while later we discuss how to generalized our method to directed attributed networks. The sequence of links encodes network dynamics. Therefore, the capability of a model for representation learning of temporal networks is typically evaluated by how accurately it may predict future links based on the historical links~\citep{sarkar2012nonparametric}. In this work, we also use link prediction as the metric. Note that we care not only the link prediction between the nodes that have been seen during the training. We also expect the models to predict links between the nodes that has never been seen as the inductive evaluation. 
\vspace{-0.2cm}

\xhdr{Notations} We define $\mathcal{E}_{v,t}= \{(e,t')\in \mathcal{E}|t'<t, v\in e\}$ to include the links attached to a node $v$ before certain time $t$. A walk $W$ (reverse over time) on temporal networks can be represented as 
\begin{align}\label{eq:walk}
    W=((w_0,t_0),(w_1,t_1),...,(w_m,t_m)), \;  t_{0} > t_1 > \cdots > t_m, \; (\{w_{i-1}, w_{i}\}, t_i)\in \mathcal{E}\;\text{for all $i$}. 
\end{align}
We use $W[i]$ to denote the $i$th node-time pair, and $W[i][0]$ and $W[i][1]$ to denote the corresponding node and time in $W[i]$ correspondingly. Later, we also use $\oplus$ as vector concatenation. 
 
{\em Temporal network motifs} are defined as connected subgraphs that consist of links appearing within a restricted time range~\citep{kovanen2011temporal}. Based this definition, each walk defined in Eq.~\ref{eq:walk} naturally corresponds to a temporal network motif as long as $(t_1 - t_m)$ is in the time range.

\vspace{-0.25cm}
\section{Proposed Method: Causal Anonymous Walk-Network}
\vspace{-0.15cm}


\begin{figure}[t]
\begin{minipage}{0.67\textwidth}
\begin{algorithm}[H]
\SetKwInOut{Input}{Input}\SetKwInOut{Output}{Output}
Initialize $M$ walks: $W_i\leftarrow ((w_0,t_0))$, $1\leq i \leq M$ \;
\For{$j$ from $1$ to $m$}{
\For{$i$ from $1$ to $M$}{
$(w_{\text{p}}, t_{\text{p}}) \leftarrow$ the last (node, time) pair in $W_i$\;
Sample one $(e, t)\in \mathcal{E}_{w_{\text{p}},t_{\text{p}}}$ with prob. $\propto$ $\exp(\alpha (t-t_{\text{p}}))$
Denote $e=\{w',w\}$ and then $W_i \leftarrow W_i \oplus (w',t)$\;
}
}
Return $\{W_i|1\leq i \leq M\}$\;
\caption{Temporal Walk Extraction ($\mathcal{E}$, $\alpha$, $M$, $m$, $w_0$, $t_0$)}\label{alg:RWBT}
\end{algorithm}
\end{minipage}
\hfill
\begin{minipage}{0.3\textwidth}
\begin{figure}[H]
    \flushright
    \includegraphics[width=0.99\textwidth]{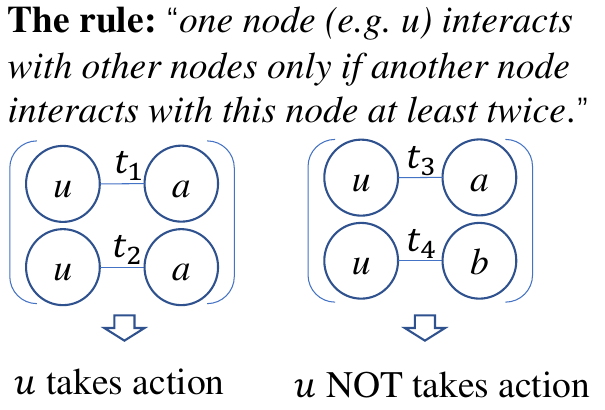}
    \vspace{-0.7cm}
    \caption{\footnotesize{The correlation between walks needs to be captured to learn this law.}}
    \label{fig:AW-issue}
\end{figure}
\end{minipage}
\vspace{-0.55cm}
\end{figure}


\vspace{-0.2cm}
\subsection{Preliminaries: Anonymous Walk and Temporal Network Motif}\label{subsec:aw}
\vspace{-0.2cm}

{\em Anonymous walks} were first considered by \cite{micali2016reconstructing} to study the reconstruction of a Markov process from the records without sharing a common ``name space''. AWs can be directly rephrased in the network context. Specifically, an AW starts from a node, performs random walks over the graph to collect a walk of nodes, e.g. $(v_1, v_2, ..., v_m)$. AW has an important anonymization step to replace the node identities by the orders of their appearance in each walk, which we term relative node identities in AW and define it as \vspace{-0.05cm}
\begin{align}\label{eq:AW}
  I_{AW}(w; W)    \triangleq |\{v_0, v_1, . . . , v_{k^*}\}|,\;\text{where $k^*$ is the smallest index s.t. $v_{k^*}=w$}. 
\end{align}
Note that the set operation above removes duplicated elements. Although AW removes node identities, those nodes are still distinguishable within this walk. Therefore, an AW can be viewed as a network motif while the information on which specific nodes form this motif is removed. Examples of AWs are shown as follows. While these are two different walks, they may be mapped to the same AW when node identities get removed.  \\
{\vspace{0.05cm}\hspace{0.17\textwidth}
\includegraphics[trim={2.2cm 12.5cm 5.2cm 5.8cm},clip,width=0.66\textwidth]{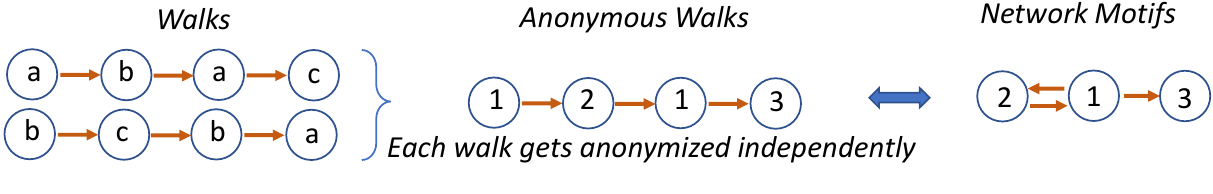}}




\vspace{-0.3cm}
\subsection{Causal Anonymous Walk} \label{subsec:caw}
\vspace{-0.2cm}

We propose CAW that shares the high-level concept with AW to remove the original node identities. However, CAW has a different causal sampling strategy and a novel set-based approach for node anonymization, which are specifically designed to encode temporal network dynamics (Fig.~\ref{fig:CAW}). 
\vspace{-0.2cm}

\xhdr{Causality Extraction} Alg.~\ref{alg:RWBT} shows our causal sampling: We sample connected links by backtracking over time to extract the underlying causality of network dynamics. More recent links may be more informative and thus we introduce a non-negative hyper-parameter $\alpha$ to sample a link with a probability proportional to $\exp(\alpha (t-t_{\text{p}}))$ where $t,t_{\text{p}}$ are the timestamps of this link and the link previously sampled respectively. 
A large $\alpha$ can emphasize more on recent links while zero $\alpha$ leads to uniform sampling. In Sec.\ref{sec:disc}, we will discuss an efficient sampling strategy for the step 5 in Alg.\ref{alg:RWBT}, which avoids computing those probabilities by visiting the entire historical links.


Then, given a link $\{u_0,v_0\}$ and a time $t_0$, we use Alg.~\ref{alg:RWBT} to collect $M$ many $m$-step walks starting from both $u_0$ and $v_0$, and record them in $S_u$ and $S_v$ respectively. For convenience, a walk $W$ from a starting node $w_0\in\{u,v\}$ can be represented as Eq.\ref{eq:walk}.

\vspace{-0.2cm}
\xhdr{Set-based Anonymization} Based on $S_u$ and $S_v$, we may anonymize each node identity $w$ that appears on at least one walk in $S_u\cup S_v$ and design relative node identity $I_{CAW}(w; \{S_u, S_v\})$ for $w$. Our design has the following consideration. $I_{AW}$ (Eq.\ref{eq:AW}) only depends on a single path, which results from the original assumption that any two AWs do not even share the name space (\textit{i.e.}, node identities)~\citep{micali2016reconstructing}. However, in our case, node identities are actually accessible, though an inductive model is not allowed to use them directly. Instead, correlation across different walks could be a key to reflect laws of network dynamics: Consider the case when the link $\{u,v\}$ happens only if there is another node appearing in multiple links connected to $u$ (Fig.~\ref{fig:AW-issue}). Therefore, we propose to use node identities to first establish such correlation and then remove the original identities. 

$g(w, S_{w_0})[i] \triangleq |\{W|W\in S_{w_0}, w = W[i][0]\}|$ for $i\in\{0,1,...,m\}$

Specifically, we define $I_{CAW}(w; \{S_u, S_v\})$ as follows: For $w_0\in\{u,v\}$, let $g(w, S_{w_0})\in \mathbb{Z}^{m+1}$ count the times in $S_{w_0}$ node $w$ appears at certain positions, i.e., $g(w, S_{w_0})[i] \triangleq |\{W|W\in S_{w_0}, w = W[i][0]\}|$ for $i\in\{0,1,...,m\}$. Further, define 
\begin{align}\label{eq:I-CAW}
    I_{CAW}(w; \{S_u, S_v\}) \triangleq \{g(w, S_u),\,g(w, S_v)\}. 
\end{align}
Essentially, $g(w, S_u)$ and $g(w, S_v)$ encode the correlation between walks within $S_u$ and $S_v$ respectively and the set operation in Eq.\ref{eq:I-CAW} establishes the correlation across $S_u$ and $S_v$. For the case in Fig.\ref{fig:TGAT-issue}, suppose $S_a, S_b, S_{a'}, S_{b'}$ include all historical one-step walks (before $t_3$) starting from $a,b,a',b'$ respectively. Then, it is easy to show that $I_{CAW}(a; \{S_a, S_b\}) \neq I_{CAW}(a'; \{S_{a'}, S_{b}\})$ that allows differentiating $a$ and $a'$, while TGAT~\citep{xu2020inductive} as discussed in Sec.\ref{sec:related} fails. From the network-motif point of view, $I_{CAW}$ not only encodes each network motif that corresponds to one single walk as $I_{AW}$ does but also establishes the correlation among these network motifs. $I_{AW}$ cannot establish the correlation between motifs and will also fail to distinguish $a$ and $a'$ in Fig.\ref{fig:TGAT-issue}. We see it as a significant breakthrough as such correlation often gets neglected in previous works that directly count motifs or adopt AW-type anonymization $I_{AW}$. 

Later, we use $I_{CAW}(w)$ for simplicity when the reference set $\{S_u, S_v\}$ can be inferred from the context. Then, each walk $W$ (Eq.\ref{eq:walk}) can be anonymized as 
\begin{align}\label{eq:W-CAW}
    \hat{W}=((I_{CAW}(w_0),t_0),(I_{CAW}(w_1),t_1),...,(I_{CAW}(w_m),t_m)). 
\end{align}
The following theorem indicates that $I_{CAW}$ does not depend on node identities to guarantee the inductive property of the models, which can be easily justified. 
\begin{theorem}
For two pairs of walk sets $\{S_u, S_v\}$ and $\{S_{u'}, S_{v'}\}$, if there exists a bijective mapping $\pi$ between node identities such that each walk $W$ in $S_u\cup S_v$ can be bijectively mapped to one walk $W'$ in $S_{u'}\cup S_{v'}$ according to $\pi(W[i][0]) = W'[i][0]$ for all $i\in [0,m]$. Then $I_{CAW}(w|\{S_u, S_v\}) = I_{CAW}(\pi(w)|\{S_{u'}, S_{v'}\})$ for all nodes $w$ that appear in at least one walk in $S_u\cup S_v$.
\end{theorem}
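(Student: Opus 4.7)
The plan is to unpack the definition of $I_{CAW}$ into the coordinate-wise hit-count vectors $g$, show these vectors are preserved under the node-relabeling $\pi$ by pushing walks through the bijection, and then let the outer set structure of Eq.~\ref{eq:I-CAW} absorb the possible swap of the roles of $u$ and $v$.

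First I would pin down how $\pi$ acts on the two anchor nodes. Every walk in $S_{w_0}$ begins with $w_0$, and the walk bijection $\Phi: S_u \cup S_v \to S_{u'} \cup S_{v'}$ satisfies $\pi(W[i][0]) = \Phi(W)[i][0]$, so the image of any walk in $S_u$ starts at $\pi(u)$. Since this image lies in $S_{u'} \cup S_{v'}$ (walks therein start at $u'$ or $v'$), $\pi(u) \in \{u', v'\}$; symmetrically $\pi(v) \in \{u', v'\}$, and bijectivity of $\pi$ forces $\{\pi(u), \pi(v)\} = \{u', v'\}$. As a consequence, $\Phi$ restricts to bijections $S_u \to S_{\pi(u)}$ and $S_v \to S_{\pi(v)}$.

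Next I would prove the coordinate-wise identity $g(w, S_u)[i] = g(\pi(w), S_{\pi(u)})[i]$ for every $i \in \{0, 1, \ldots, m\}$ and every $w$ appearing in $S_u \cup S_v$. The restriction of $\Phi$ to the subset $\{W \in S_u : W[i][0] = w\}$ lands inside $\{W' \in S_{\pi(u)} : W'[i][0] = \pi(w)\}$ by the defining property of $\Phi$; injectivity of $\pi$ ensures the condition $\pi(W[i][0]) = \pi(w)$ is equivalent to $W[i][0] = w$, so no extraneous walks get swept in, and applying the same argument to $\Phi^{-1}$ gives a two-sided inverse. Hence the two finite sets have equal cardinality. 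The identical argument on $S_v$ yields $g(w, S_v) = g(\pi(w), S_{\pi(v)})$.

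Finally I would combine the two equalities. Writing
\[
I_{CAW}(w; \{S_u, S_v\}) = \{g(w, S_u),\, g(w, S_v)\} = \{g(\pi(w), S_{\pi(u)}),\, g(\pi(w), S_{\pi(v)})\},
\]
and using $\{\pi(u), \pi(v)\} = \{u', v'\}$, the right-hand multiset collapses to $\{g(\pi(w), S_{u'}),\, g(\pi(w), S_{v'})\} = I_{CAW}(\pi(w); \{S_{u'}, S_{v'}\})$. The only nontrivial subtlety I expect, and what I would flag as the main point to handle carefully rather than a routine check, is the case where $\pi$ swaps $u$ and $v$: then $g(w, S_u)$ matches $g(\pi(w), S_{v'})$ rather than $g(\pi(w), S_{u'})$, and it is precisely the unordered-set formulation in Eq.~\ref{eq:I-CAW} (as opposed to an ordered pair of count vectors) that makes the equality survive this swap. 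This observation also explains, retrospectively, why the set rather than the tuple is the correct object to hang the anonymization on.
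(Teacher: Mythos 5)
Your proof is correct: pinning $\{\pi(u),\pi(v)\}=\{u',v'\}$ via the first walk positions, transporting the position-wise counts $g(\cdot,\cdot)[i]$ through the walk bijection, and letting the unordered set in Eq.~\ref{eq:I-CAW} absorb a possible swap of the anchors is exactly the routine verification the paper has in mind when it states the theorem ``can be easily justified'' without writing out a proof. Your explicit handling of the swap case is the right detail to flag, and nothing in your argument needs repair.
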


\vspace{-0.2cm}
\subsection{Neural Encoding for Causal Anonymous Walks}\label{subsec:neural_encoding}
\vspace{-0.15cm}
 After we collect CAWs, neural networks can be conveniently leveraged to extract their structural (in $I_{CAW}(\cdot)$) and fine-grained temporal information by encoding CAWs: We will propose the model CAW-N to first encode each walk $\hat{W}$ (Eq.\ref{eq:W-CAW}) and then aggregate all encoded walks in $S_u\cup S_v$. \vspace{-0.2cm} 

\xhdr{Encode $\hat{W}$} Note that each walk is a sequence of node-time pairs. If we encode each node-time pair and plug those pairs in a sequence encoder, \textit{e.g.}, RNNs, we obtain the encoding of $\hat{W}$:
\begin{align}\label{eq:enc-W-CAW}
    \text{enc}(\hat{W})= \text{RNN}(\{f_1(I_{CAW}(w_i))\oplus f_2(t_{i-1}-t_i)\}_{i=0,1,...,m} ),\;\text{where $t_{-1}=t_{0}$,}
\end{align}
where $f_1, f_2$ are two encoding function on $I_{CAW}(w_i)$ and $t_{i-1}-t_i$ respectively.
One may use transformer networks instead of RNNs to encode the sequences but as the sequences in our case are not long $(1\sim5)$, RNNs have achieved good enough performance. Now, we specify the two encoding functions $f_1(I_{CAW}(w_i))$ and $f_2(t_{i-1}-t_i)$ as follows. Recall the definition of $I_{CAW}(w_i)$ (Eq.\ref{eq:I-CAW}).
\begin{align}
    f_1(I_{CAW}(w_i)) = \text{MLP}(g(w, S_u)) + \text{MLP}(g(w, S_v)), \quad \text{where two MLPs share parameters.} 
\end{align}
Here the encoding of $I_{CAW}(w_i)$ adopts the sum-pooling as the order of $u,v$ is not relevant. For $f_2(t)$, we adopt random Fourier features to encode time~\citep{xu2019self,kazemi2019time2vec} which may approach any positive definite kernels according to the Bochner’s theorem~\citep{bochner1992theorem}. 
\begin{align}\label{eq:randomf}
    f_2(t) = [\cos(\omega_1 t), \sin(\omega_1 t), ..., \cos(\omega_d t), \sin(\omega_d t)],\; \text{where $\omega_i$'s are learnable parameters.}
\end{align}

\vspace{-3mm}
\xhdr{Encode $S_u\cup S_v$} After encoding each walk in $S_u\cup S_v$, we aggregate all these walks to obtain the final representation $\text{enc}(S_u\cup S_v)$ for prediction.
We suggest to use either mean-pooling for algorithmic efficiency or self-attention~\citep{vaswani2017attention} followed by mean-pooling to further capture subtle interactions between different walks. Specifically, suppose $\{\hat{W}_i\}_{1\leq i\leq 2M}$ are the $2M$ CAWs in $S_u\cup S_v$ and each $\text{enc}(\hat{W}_i)\in \mathbb{R}^{d\times 1}$. We set $\text{enc}(S_u\cup S_v)$ as 
\begin{itemize}\vspace{-2mm}
    \item{Mean-AGG($S_u\cup S_v$):} $\frac{1}{2M}\sum_{i=1}^{2M} \text{enc}(\hat{W}_i)$.
    \item{Self-Att-AGG($S_u\cup S_v$):} $\frac{1}{2M}\sum_{i=1}^{2M} \text{softmax}(\{\text{enc}(\hat{W}_i)^T Q_1 \text{enc}(\hat{W}_j)\}_{1\leq j\leq n}) \text{enc}(\hat{W}_i) Q_2$ where $Q_1, Q_2\in \mathbb{R}^{d\times d}$ are two learnable parameter matrices.
\end{itemize}
\vspace{-2mm}
 We add a 2-layer perceptron over  $\text{enc}(S_u\cup S_v)$ to make the final link prediction.

\vspace{-0.2cm}
\subsection{Extension and Discussion}\label{sec:disc}
\vspace{-0.3cm}

\xhdr{Attributed nodes/links and directed links} In some real networks, nodes or links may have attributes available, e.g., the message content in the case of SMS networks. In this case, the walk in Eq.\ref{eq:walk} can be associated with node/link attributes $X_0, X_1, ...,X_{m}$ where $X_i$ refers to the attributes on link $(\{w_{i-1}, w_i\}, t_i)$ or on the node $w_i$ or a concatenation of these two parts. 
Note that the direction of a link can also be viewed as a binary link attribute, where a 2-dimensional one-hot encoding can be used. To incorporate such information, we only need to change $\text{enc}(\hat{W})$ (Eq.\ref{eq:enc-W-CAW}) as  \vspace{-0.1cm}
\begin{align}\label{eq:enc-W-CAW-attr}
\vspace{-0.1cm}
    \text{enc}(\hat{W})= \text{RNN}(\{f_1(I_{CAW}(w_i))\oplus f_2(t_{i-1}-t_i)\oplus X_i\}_{i=0,1,...,m} ), \;\text{where $t_{-1}=t_{0}$.}
    \vspace{-0.1cm}
\end{align}
The derived encoding is then concatenated with $\text{enc}(\hat{W})$ to obtain the enhanced final encoding of $\hat{W}$.

\vspace{-0.4cm}\xhdr{Efficient link sampling} A naive implementation of the link sampling in step 5 in Alg.\ref{alg:RWBT} is to compute and normalize the sampling probabilities of all links in $\mathcal{E}_{w_{\text{p}},t_{\text{p}}}$, which requires to memorize all historical links and costs much time and memory. To solve this problem, we propose a sampling strategy (Appendix~\ref{apd:sampling}) with expected time and memory complexity $\min\{\frac{2\tau}{\alpha}+1, |E_{w_{\text{p}},t_{\text{p}}}|\}$ if links in $\mathcal{E}_{w_{\text{p}},t_{\text{p}}}$ come in by following a Poisson process with intensity $\tau$~\citep{last2017lectures}. This means that our sampling strategy with a positive $\alpha$ allows the model only recording $O(\frac{\tau}{\alpha})$ recent links for each node instead of the entire history. Our experiments in Sec.\ref{subsec:ha} show that $\alpha$ that achieves the best prediction performance makes $\frac{\tau}{\alpha}\approx$5 in different datasets. Since the time and memory complexity do not increase with respect to the number of links, our model can be used for online training and inference. Note that the $\alpha=0$ case reduces to uniform sampling, mostly adopted by previous methods~\citep{xu2020inductive}, which requires to record the entire history and thus is not scalable.

\begin{table}[t]
\centering \small
\resizebox{\textwidth}{!}{%
\begin{tabular}{l|cccccc}
\hline
\textbf{Measurement} & \textbf{Reddit}   & \textbf{Wikipedia} & \textbf{MOOC} & \textbf{Enron} & \textbf{Social Evo.}  & \textbf{UCI}    \\ \hline
nodes \& temporal links       & 10,985 \& 672,447 &  9,227 \& 157,474 & 7145 \& 411,749 & 184 \& 125,235       & 74 \& 2,099,520 & 1,899 \& 59,835 \\
attributes for nodes \& links    & 172 \& 172 &  172 \& 172 & 0 \& 4 & 0 \& 0 & 0 \& 0   & 0 \& 0 \\
avg. link stream intensity $\tau$             & $4.57\times10^{-5}$     & $1.27\times10^{-5}$   &   $4.48\times10^{-5}$  & $6.50\times10^{-5}$        &    $4.98\times10^{-3}$  & $3.59 \times10^{-5}$  \\
\hline
\end{tabular}%
}
\vspace{-2.5mm}
\caption{\small Summary of dataset statistics. Average link stream intensity is calculated by $2|E|/(|V|T)$, where $T$ is the total time range of all edges in unit of seconds, $|V|$ and $|E|$ are number of nodes and temporal links. \hide{Average node degree shows the expected length of interaction history (\textit{i.e.} temporal link sequence) attached to an end node of a randomly picked edge prior to the link's timestamp.}}
\label{tab:dataset}
\vspace{-2mm}
\end{table}

\vspace{-0.3cm}
\section{experiments}
\vspace{-0.3cm}
\subsection{Experimental Setup}\label{section:setting}
\vspace{-3.3mm}
\xhdr{CAW-N Variants} We test CAW-N-mean and CAW-N-attn which uses mean and attention pooling respectively to encode $S_u\cup S_v$ (Sec. \ref{subsec:neural_encoding}). Their code is provided in the supplement.
\vspace{-3mm}

\xhdr{Baselines} Our method is compared with six previous state-of-the-art baselines on representation learning of temporal networks. They can be grouped into two categories based on their input data structure: (1) Snapshot-based methods, including DynAERNN~\citep{goyal2020dyngraph2vec}, VGRNN~\citep{hajiramezanali2019variational} and EvolveGCN~\citep{pareja2020evolvegcn}; (2) Stream-based methods, including TGAT~\citep{xu2020inductive}, JODIE~\citep{kumar2019predicting} and DyRep~\citep{trivedi2019dyrep}.  We give their detailed introduction in Appendix~\ref{adp:baseline}. For the snapshot-based methods, we view the link aggregation as a way to preprocess historical links. We adopt the aggregation ways suggested in their papers. These models are trained and evaluated over the same link sets as the stream-based methods.
\vspace{-0.2cm}
\hide{As mentioned in Sec.\ref{sec:related}, the baselines can be grouped into two categories based on their input data structure: snapshot-based and stream-based. \textbf{Snapshot-based methods:} 1) DynAERNN~\citep{goyal2020dyngraph2vec} uses a fully connected encoder to acquire network representations, passes them into LSTM and uses a fully connected network to decode the future network structures; 2) VGRNN~\citep{hajiramezanali2019variational} generalizes the variational GAE (\cite{kipf2016variational}) to temporal graphs, which makes the prior depend on the history dynamics and captures those dynamics with RNNs; 3) EvolveGCN~\citep{pareja2020evolvegcn} uses a RNN to estimate the GCN parameters for the future snapshots. 
\textbf{Stream-based methods:} 1) TGAT~\citep{xu2020inductive} leverages GAT to extract node representations where the nodes' neighbors are sampled from the history and encodes temporal information via Eq.\ref{eq:randomf};  2) JODIE~\citep{kumar2019predicting} applies RNNs to estimate the future embedding of nodes. The model was proposed for bipartite graphs while we properly modify it for standard graphs if the input graphs are non-bipartite; 3) DyRep~\citep{trivedi2019dyrep} also uses RNNs to learn node embedding while its loss function is built upon temporal point process.} 

\hide{Our method is compared with six previous state-of-the-art baselines on link prediction tasks. As mentioned in Sec.\ref{sec:related}, the baselines can be grouped into two categories based on their input data structure, snapshot-based and stream-based. \textbf{Snapshot-based methods:} 1) DynAERNN uses a fully connected encoder to acquire low dimensional representations and passes them as the input of LSTM to learn the embedding. The decoder of this model is a fully connected network; 2) VGRNN is a method that generalizes the variational GAE (\cite{kipf2016variational}) to temporal graphs, which makes the prior depend on the snapshots’ history dynamics and captures those dynamics with RNNs; 3) EvolveGCN is a method that captures the dynamic information without knowing a node's information in the full time span. It also uses a RNN to estimate the GCN parameters for the future snapshots. \textbf{Stream-based methods:} 1) TGAT, the Temporal Graph Attention Network introduced in Sec.\ref{sec:related};  2) JODIE applies two recurrent neural networks to estimate the future embedding trajectory of users and items.  The network was trained and updated at each interaction; 3) DyRep is a method that has been proposed for learning node embedding from dynamic graphs based on temporal point process. This method operates on continuous time and scales well to large graphs.}
\vspace{-0.1cm}
\xhdr{Dataset} We use six real-world public datasets: Wikipedia is a network between wiki pages and human editors. Reddit is a network between posts and users on subreddits. MOOC is a network  of students and online course content units. Social Evolution is a network recording the physical proximity between students. Enron is a email communication network. UCI is a network between online posts made by students. We summarize their statistics in Tab.\ref{tab:dataset} and give their detailed description and access in Appendix~\ref{adp:dataset}. 

\hide{\xhdr{Dataset.} We use six real-world public datasets whose statistics are summarized in Tab.\ref{tab:dataset}. Wikipedia is a dataset of edits over wiki pages over a month, whose nodes represent human editors and wiki pages and whose links represent timestamped edits. Reddit is a dataset of posts made by users on subredditts over a month. Its nodes are users and posts, and its edges are the timestamped posting requests. Both Wikipedia and Reddit are attributed with node and edge text features extracted from the log. Similar to Reddit, UCI is also a dataset recording online posts made by university students on a forum, but is non-attributed. Social Evolution is a dataset recording the detailed evolving physical proximity between students in a dormitory over a year, determined from wireless signals of their mobile devices. Enron is a communication network whose links are email communication between core employees of a cooperation over several years. MOOC is a dataset of online courses where nodes represent students and course content such as videos and problem sets, and edges represent student's access behavior to particular learning content. We provide their access in  Appendix.} \hide{Notice that these datasets vary in terms of their nature, topological characteristics of the temporal structure, as well as the availability of the node/edge attributes.}
\vspace{-0.2cm}
\xhdr{Evaluation Tasks} Two types of tasks are for evaluation: transductive and inductive link prediction. 
\vspace{-1mm}

\textit{Transductive link prediction task} allows temporal links between all nodes to be observed up to a time point during the training phase, and uses all the remaining links after that time point for testing. In our implementation, we split the total time range [0, $T$] into three intervals: [0, $T_{train}$), [$T_{train}$, $T_{val}$), [$T_{val}$, $T$]. links occurring within each interval are dedicated to training, validation, and testing set, respectively. For all datasets, we fix $T_{train}$/$T$=0.7, and $T_{val}$/$T$=0.85 .

\textit{Inductive link prediction task} predicts links associated with nodes that are not observed in the training set. There are two types of such links:
\hide{\jure{Reformat text here so that "new vs. new" and "new vs old" are easy to spot and it is clear that these are task names.}}
1) \textbf{"old vs. new"} links, which are links between an observed node and an unobserved node; 2) \textbf{"new vs. new"} links, which are links between two unobserved nodes. Since these two types of links suggest different types of inductiveness, we distinguish them by reporting their performance metrics \textit{separately}. In practice, we follow two steps to split the data: 1) we use the same setting of the transductive task to first split the links chronologically into training / validation / testing sets; 2) we randomly select 10\% nodes, remove any links associated with them from the training set, and remove any links \textit{not} associated with them in the validation and testing sets. 


Following most baselines, we randomly sample an equal amount of negative links and consider link prediction as a binary classification problem. For fair comparison, we use the same evaluation procedures for all baselines, including the snapshot-based methods. 

\vspace{-0.3cm}
\xhdr{Training configuration}
We use binary cross entropy loss and Adam optimizer to train all the models, and early stopping strategy to select the best epoch to stop training. For hyperparameters, we primarily tune those that control the CAW sampling scheme including the number $M$, the length $m$ of CAWs and the time decay $\alpha$. We will investigate their sensitivity in Sec.\ref{subsec:ha}.
\hide{Hidden dimensions of CAW-N mentioned in \ref{subsec:neural_encoding}, including the various encodings, MLPs, RNN, and attention projection matrices, are less tuned but are selected with two principles: 1) when node \& link attributes are available, dimension of all these modules are set to have the same dimensions as TGAT\citep{xu2020inductive}; 2) when node \& link attributes are unavailable,  the dimensions are picked from {32, 64, 128}, whichever leads to a better performance.} For all baselines, we adapt their implemented models into our evaluation pipeline and extensively tune them. Detailed description of all models' tuning can be found in Appendix \ref{sec:addtional_setup}. Finally, we adopt two metrics to evaluate the models' performance: Area Under the ROC curve (AUC) and Average Precision (AP). \hide{\pan{More details can be moved to appendix.}} 

\hide{We use binary cross entropy loss and Adam optimizer to train all the models, with early stopping strategy to select the best epoch to stop training. For hyperparameters, we primarily tune those that control the CAW sampling scheme including the number $M$, the length $m$ of CAWs and the time decay $\alpha$. We will investigate their sensitivity in Sec.\ref{subsec:ha}. Other hyperparameters of CAW-N are mentioned in \ref{subsec:neural_encoding}, including the various encodings, MLPs, RNN, and attention projection matrices, are less tuned but are selected with two principles: 1) when node \& link attributes are available, dimension of all these modules are set to have the same dimensions similar to \cite{xu2020inductive}; 2) when node \& link attributes are unavailable,  the dimensions are picked from {32, 64, 128}, whichever leads to a better performance. For baselines, we tune them to align their total volumes to be similar for fair comparison. We leave descriptions of detailed configurations to Appendix. Finally, we adopt two metrics to evaluate the models' performance: Area Under the ROC curve (AUC) and Average Precision (AP). The reason that Accuracy is not used is that a proper confidence threshold is ill-defined in literature which in practice leads to unfair comparison.}

\vspace{-0.4cm}
\subsection{Results and Discussion}
\vspace{-0.2cm}

\begin{table}[t]
    \centering
    \resizebox{\columnwidth}{!}{%
    \begin{tabular}{c|c|l|llllll}
    \hline
    \multicolumn{2}{c|}{Task} & Methods & \multicolumn{1}{c}{Reddit} & \multicolumn{1}{c}{Wikipedia} & \multicolumn{1}{c}{MOOC} & \multicolumn{1}{c}{Social Evo.} & \multicolumn{1}{c}{Enron} & \multicolumn{1}{c}{UCI}\\
    \hline
    \multirow{16}{*}{\rotatebox[origin=c]{90}{Inductive}}
    &\multirow{8}{*}{\rotatebox[origin=c]{90}{new v.s. new}}
    & DynAERNN & 57.51 $\pm$ 2.54 & 55.16 $\pm$ 1.15 & 60.85 $\pm$ 1.61 & 52.00 $\pm$ 0.16 & 51.57 $\pm$ 2.63 & 50.20 $\pm$ 2.78\\
    && JODIE & 72.49 $\pm$ 0.38 & 70.78 $\pm$ 0.75 & 80.04 $\pm$ 0.28$^\dagger$ & 87.66 $\pm$ 0.12$^\dagger$ & 73.99 $\pm$ 2.54$^\dagger$ & 64.77 $\pm$ 0.75\\
    && DyRep & 62.37 $\pm$ 1.49 & 67.07 $\pm$ 1.26 & 74.07 $\pm$ 1.88 & 83.92 $\pm$ 0.02 & 69.74 $\pm$ 0.44 & 63.76 $\pm$ 4.67\\
    && VGRNN & 61.93 $\pm$ 0.72 & 60.64 $\pm$ 0.68 &  63.01 $\pm$ 0.29 & 54.49 $\pm$ 1.21 & 72.01 $\pm$ 0.08 & 61.35 $\pm$ 1.10\\
    && EvolveGCN & 63.31 $\pm$ 0.53 & 58.01 $\pm$ 0.16 & 52.31 $\pm$ 4.14 & 46.95 $\pm$ 0.85 & 42.53 $\pm$ 2.12 & 76.65 $\pm$ 0.63$^\dagger$ \\
    && TGAT & 94.96 $\pm$ 0.88$^\dagger$ & 93.53 $\pm$ 0.84$^\dagger$ & 70.10 $\pm$ 0.35 & 53.27 $\pm$ 1.16 & 63.34 $\pm$ 2.95 & 76.36 $\pm$ 1.48 \\
    && \textbf{\caws} & \textbf{\bl{97.67 $\pm$ 0.08$^\ast$}} & \textbf{\bl{99.27 $\pm$ 0.08$^\ast$}} & \bl{\textbf{90.52 $\pm$ 0.54}$^\ast$} & \bl{\textbf{95.55 $\pm$ 0.40}$^\ast$} & \textbf{\bl{96.43 $\pm$ 0.39$^\ast$}} & \textbf{\bl{93.14 $\pm$ 0.32$^\ast$}}\\
    && \textbf{\cawa} & \textbf{\bl{97.71 $\pm$ 0.08$^\ast$}} & \textbf{\bl{99.29 $\pm$ 0.10$^\ast$}} &  \bl{\textbf{90.86 $\pm$ 0.56}$^\ast$} & \bl{\textbf{95.20 $\pm$ 0.68}$^\ast$} & \textbf{\bl{96.98 $\pm$ 0.41$^\ast$}} & \textbf{\bl{93.16 $\pm$ 0.25$^\ast$}}\\
   
    \cline{2-9}
    &\multirow{8}{*}{\rotatebox[origin=c]{90}{new v.s. old}}
    & DynAERNN & 58.79 $\pm$ 3.01 & 57.97 $\pm$ 2.38 & 80.99 $\pm$ 1.35 & 52.31 $\pm$ 0.59 & 54.36 $\pm$ 1.48 & 52.26 $\pm$ 1.36\\
    && JODIE & 76.33 $\pm$ 0.03 & 74.65 $\pm$ 0.06 & 87.40 $\pm$ 1.71 & 91.80 $\pm$ 0.01$^\dagger$ & 85.24 $\pm$ 0.08 & 69.95 $\pm$ 0.11\\
    && DyRep & 66.13 $\pm$ 1.07 & 76.72 $\pm$ 0.19 & 88.23 $\pm$ 1.20$^\dagger$ & 87.98 $\pm$ 0.45 & 94.39 $\pm$ 0.32$^\dagger$ & 93.28 $\pm$ 0.96$^\dagger$\\
    && VGRNN & 54.11 $\pm$ 0.74 & 62.93 $\pm$ 0.69 & 60.10 $\pm$ 0.88 & 64.66 $\pm$ 0.41 & 76.33 $\pm$ 0.05 & 62.39 $\pm$ 1.08 \\
    && EvolveGCN & 65.61 $\pm$ 0.37 & 56.29 $\pm$ 2.17 & 50.20 $\pm$ 1.92 & 50.73 $\pm$ 1.36 & 42.53 $\pm$ 2.13 & 70.78 $\pm$ 0.22\\
    && TGAT & 97.25 $\pm$ 0.18$^\dagger$ & 95.47 $\pm$ 0.17$^\dagger$ & 69.30 $\pm$ 0.08 & 54.22 $\pm$ 1.28 & 58.76 $\pm$ 1.18 & 74.19 $\pm$ 0.88\\
     && \textbf{\caws} & \bl{\textbf{97.70 $\pm$ 0.17$^\ast$}} & \bl{\textbf{98.77 $\pm$ 0.13$^\ast$}} & \bl{\textbf{91.08 $\pm$ 0.20}$^\ast$} & \bl{91.52 $\pm$ 0.21} & \bl{93.98 $\pm$ 0.44} & \bl{91.15 $\pm$ 0.24} \\
    && \textbf{\cawa} & \textbf{\bl{97.56 $\pm$ 0.14$^\ast$}} & \bl{\textbf{98.21 $\pm$ 0.17$^\ast$}} & \bl{\textbf{90.40 $\pm$ 0.13}$^\ast$} & \bl{91.55 $\pm$ 0.30}& \bl{93.99 $\pm$ 0.62} & \bl{91.17 $\pm$ 0.26} \\
    \hline
    
    \multicolumn{2}{c|}{\multirow{8}{*}{\rotatebox[origin=c]{90}{Transductive}}}
    & DynAERNN & 83.37 $\pm$ 1.48 & 71.00 $\pm$ 1.10 & 89.34 $\pm$ 0.24 & 67.78 $\pm$ 0.80 & 63.11 $\pm$ 1.13 & 83.72 $\pm$ 1.79\\
    \multicolumn{2}{c|}{}& JODIE & 87.71 $\pm$ 0.02 & 88.43 $\pm$ 0.02 & 90.50 $\pm$ 0.01$^\dagger$ & 89.78 $\pm$ 0.04 & 89.36 $\pm$ 0.06 & 74.63 $\pm$ 0.11\\
    \multicolumn{2}{c|}{}& DyRep & 67.36 $\pm$ 1.23 & 77.40 $\pm$ 0.13 & 90.49 $\pm$ 0.03 & 90.85 $\pm$ 0.01$^\dagger$ & 96.71 $\pm$ 0.04$^\dagger$ & 95.23 $\pm$ 0.25$^\dagger$\\
    \multicolumn{2}{c|}{}& VGRNN & 51.89 $\pm$ 0.92 & 71.20 $\pm$ 0.65 & 90.03 $\pm$ 0.32 & 72.84 $\pm$ 0.73 & 79.08 $\pm$ 0.02 & 89.43 $\pm$ 0.27\\
    \multicolumn{2}{c|}{}& EvolveGCN & 58.42 $\pm$ 0.52 & 60.48 $\pm$ 0.47 & 50.36 $\pm$ 0.85 & 60.36 $\pm$ 0.65 & 74.02 $\pm$ 0.31 & 78.30 $\pm$ 0.22\\
    \multicolumn{2}{c|}{}& TGAT & 96.65 $\pm$ 0.06$^\dagger$ & 96.36 $\pm$ 0.05$^\dagger$ & 72.09 $\pm$ 0.29 & 56.63 $\pm$ 0.55 & 60.88 $\pm$ 0.37 & 77.67 $\pm$ 0.27\\
    \multicolumn{2}{c|}{}& \textbf{\caws}& \textbf{\bl{98.07 $\pm$ 0.06$^\ast$}} & \textbf{\bl{98.83 $\pm$ 0.11$^\ast$}} & \textbf{\bl{94.28 $\pm$ 0.13$^\ast$}} & \bl{\textbf{93.88 $\pm$ 0.72}$^\ast$} &  \bl{91.64 $\pm$ 0.15} & \bl{95.25 $\pm$ 0.30$^\ast$} \\
    \multicolumn{2}{c|}{}& \textbf{\cawa}& \textbf{\bl{98.05 $\pm$ 0.09$^\ast$}} & \textbf{\bl{98.96 $\pm$ 0.10$^\ast$}} & \textbf{\bl{94.33 $\pm$ 0.08$^\ast$}} & \bl{\textbf{92.54 $\pm$ 0.58}$^\ast$} & \bl{92.06 $\pm$ 0.16} &   \bl{95.14 $\pm$ 0.25} \\
    \hline
    \end{tabular}
    }
    \vspace{-1.5mm}
    \caption{Performance in AUC (mean in percentage $\pm$ 95$\%$ confidence level.) $\dagger$ highlights the best baselines. $^\ast$, \textbf{bold font}, \textbf{bold font$^\ast$} respectively highlights the case where our models' performance exceeds the best baseline on average, by $70\%$ confidence, by $95\%$ confidence.}
    \label{tab:auc results}
    \vspace{-2.0mm}
\end{table}
\hide{\pan{You have not mentioned what are \caws and \cawa.}}
\hide{\jure{Remove "postpone" and say "give". Don't use postpone.}}
We report AUC scores in Tab.\ref{tab:auc results}, and report AP scores in Tab.\ref{tab:ap results} of Appendix \ref{subsec:ap_performance}. In the inductive setting and especially with "new vs new" links, our models significantly outperform all baselines on all datasets. \hide{\pan{Focus on new vs new, where we all win}, and slightly outperforms the strongest baseline on Enron dataset.} Even in transductive setting where the baselines claim their primary contribution, our approaches still significantly outperform them on four out of six datasets.  Note that our models achieve almost perfect scores (\textit{i.e.} $>0.98$) on Reddit and Wikpedia when the baselines are far from perfect. Meanwhile, the strongest baseline on these two attributed datasets, TGAT~\citep{xu2020inductive}, suffers a lot on all the other datasets where informative node / link attributes become unavailable.\hide{\pan{I prefer not to emphasize scores but argue agaist TGAT by saying when it works on Reddit (due to rich attributes) but not others.}.}  

Comparing the two training settings, we observe that the performance of four baselines (JODIE, VGRNN, DynAERNN,DyRep) drop significantly when transiting from the transductive setting to the inductive one, as they mostly record node identities either explicitly (JODIE, VGRNN, DynAERNN) or implicitly (DyRep). TGAT and EvolveGCN do not use node identities and thus their performance gaps between the two settings are small, while they sometimes do not perform well in the transductive setting, as they encounter the ambiguity issue in Fig.~\ref{fig:TGAT-issue}. In contrast, our methods perform well in both the transductive and inductive settings. We attribute this superiority to the anonymization procedure: the set-based relative node identities well capture the correlation between walks to make good prediction while removing the original node identities to keep entirely inductive. Even when the network structures greatly change and new nodes come in as long as the network evolves according to the same law as the network used for training, CAW-N will always work. 
Comparing \caws and \cawa, we see that our attention-based variant outperforms the mean-pooling variant, albeit at the cost of high computation complexity. Also note that the strongest baselines on all datasets are stream-based methods, which indicates that the aggregation of links into network snapshots may remove some useful time information (see more discussion in Appendix ~\ref{subsec:stream_snapshot}).

\begin{table}[t]
\scriptsize \centering
\begin{tabular}{cllll}
\hline
\textbf{No.} & \textbf{Ablation} & \multicolumn{1}{c}{\textbf{Wikipedia}} & \multicolumn{1}{c}{\textbf{UCI}}  & \multicolumn{1}{c}{\textbf{Social Evo.}} \\ \hline
1. & original method (\caws)         & \bl{98.94 $\pm$ 0.11} & \bl{91.88 $\pm$ 0.26} & \bl{91.97$\pm$ 0.23}\\
2. & remove $f_1(I_{CAW})$                   &  \bl{96.28 $\pm$ 0.66}     & \bl{79.45 $\pm$ 0.71} & \bl{53.69 $\pm$ 0.21}\\
3. & remove $f_2(t)$                   & \bl{97.90  $\pm$ 0.17}  & \bl{83.91 $\pm$ 0.39} &  \bl{68.25 $\pm$ 1.99}\\
4. & remove $f_1(I_{CAW})$, $f_2(t)$         &    \bl{88.94 $\pm$ 0.85}    & \bl{50.01 $\pm$ 0.02} & \bl{50.00 $\pm$ 0.00}\\
5  &  replace $f_1(I_{CAW})$ by one-hot$(I_{AW})$ & \bl{96.55 $\pm$ 0.21} & \bl{85.59 $\pm$ 0.34} & \bl{68.47 $\pm$ 0.86}\\
6. & fix $\alpha=0$ &  \bl{74.38 $\pm$ 2.05} & \bl{80.44 $\pm$ 0.56} & \bl{77.72 $\pm$ 0.44}\\ \hline  \end{tabular}\vspace{-3mm}
\caption{\small Ablation study with \caws. AUC scores on \textit{all inductive} test links are reported. \hide{\pan{Wiki does not match the score in Tab.2}}} 
\label{tab:ablation}
\vspace{-6mm}
\end{table}

\vspace{-1.5mm}
We further conduct ablation studies on Wikipedia (attributed), UCI (non-attributed), and Social Evolution (non-attributed), to validate effectiveness of critical components of our model. Tab.~\ref{tab:ablation} shows the results. By comparing Ab.1 with Ab.2, 3 and 4 respectively, we observe that our proposed node anonymization and encoding, $f_1(I_{CAW})$, contributes most to the performance, though the time encoding $f_2(t)$ also helps. Comparing performance across different datasets, we see that the impact of ablation is more prominent when informative node/link attributes are unavailable such as with UCI and Social Evolution. Therefore, in such scenarios our CAWs are highly crucial. In Ab.5, we replace our proposed $I_{CAW}$ with $I_{AW}$ (Eq.\ref{eq:AW}), which is used in standard AWs, and we use one-hot encoding of node new identities $I_{AW}$. We see by comparing Ab.1, 2, and 5 that such anonymization process is significantly less effective than our $I_{CAW}$, though it helps to some extent. Finally, Ab.6 suggests that entirely uniform sampling of the history may hurt performance. 

\vspace{-2.5mm}
\subsection{Hyperparameter Investigation of \caw Sampling} \label{subsec:ha}
\vspace{-1mm}
\begin{figure}[t]
    \vspace{-4mm}
    \centering
    \includegraphics[scale=0.38]{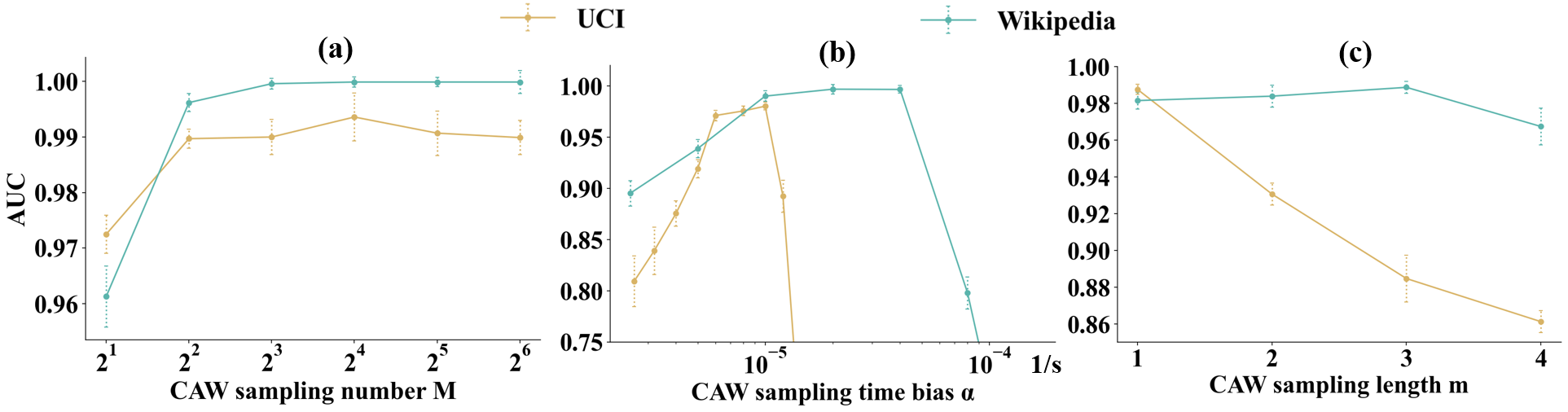}
    \vspace{-3mm}
    \caption{\small Hyperparameter sensitivity in CAW sampling. AUC on \textit{all inductive} test links are reported.}
    \label{fig:CAW_sampling}
    \vspace{-5mm}
\end{figure}


We systematically analyze the effect of hyperparameters used in \caw sampling schemes, including sampling number $M$, temporal decay coefficient $\alpha$ and walk length $m$. The experiments are conducted on UCI and Wikipedia datasets using \caws. When investigating each hyperparameter, we set the rest two to an optimal value found by grid search, and report the mean AUC performance on all inductive test links (\textit{i.e.} old vs. new, new vs. new) and their 95\% confidence intervals.

\vspace{-1mm}
The results are summarized in Fig.\ref{fig:CAW_sampling}. From (a), we observe that only a small number of sampled CAWs are needed to achieve a competitive performance. Meanwhile, the performance gain is saturated as the sampling number increases. We analyze the temporal decay $\alpha$ in (b): $\alpha$ usually has an optimal interval, whose values and length also vary with different datasets to capture the different levels of temporal dynamics; a small $\alpha$ suggests an almost uniform sampling of interaction history, which hurts the performance; an overly large $\alpha$ also damages the model, since it makes the model only sample the most recent few interactions for computation and blind to the rest. Based on our efficient sampling strategy (Sec.\ref{sec:disc}), we may combine the optimal $\alpha$ with the average link intensity $\tau$ (Tab.\ref{tab:dataset}), and concludes that CAW-N only needs to online record and sample from about a constant times about 5 ($\approx\frac{\tau}{\alpha}$) most recent links for each node. Plot (c) suggests that the performance may peak at a certain \caw length, while the exact value may vary with datasets. Longer CAWs indicate that the corresponding networks evolve according to more complicated laws encoded in higher-order motifs. \hide{\pan{Longer CAWs indicate that the corresponding networks evolve according to more complicated laws encoded in higher-order motifs.}}

\vspace{-3mm}
\subsection{Complexity Evaluation}
\vspace{-2mm}

We examine how the runtime of CAW-N depends on the number of edges $|\mathcal{E}|$ used for training.  We record the runtimes of CAW-N for training one epoch on the Wikipedia datasets using $M=32,  m=2$ with batch-size=32. Specifics of the computing infrastructure are given in Appendix \ref{subsec:infras}.  Fig. \ref{fig:runtime}  (a) shows the accumulated runtime of executing the random walk extraction \textit{i.e.} Alg.\ref{alg:RWBT} only. It well aligns with our theoretical analysis (Thm. \ref{thm:sample}) that each step of the random walk extraction has constant complexity (\textit{i.e.} accumulated runtime linear with $|\mathcal{E}|$). Plot (b) shows the entire runtime for one-epoch training, which is also linear with $|\mathcal{E}|$. Note that $O(|\mathcal{E}|)$ is the time complexity that one at least needs to pay. The study demonstrates our method is scalable to long edge streams.
\begin{figure}[t]
    \centering
    \includegraphics[scale=0.3]{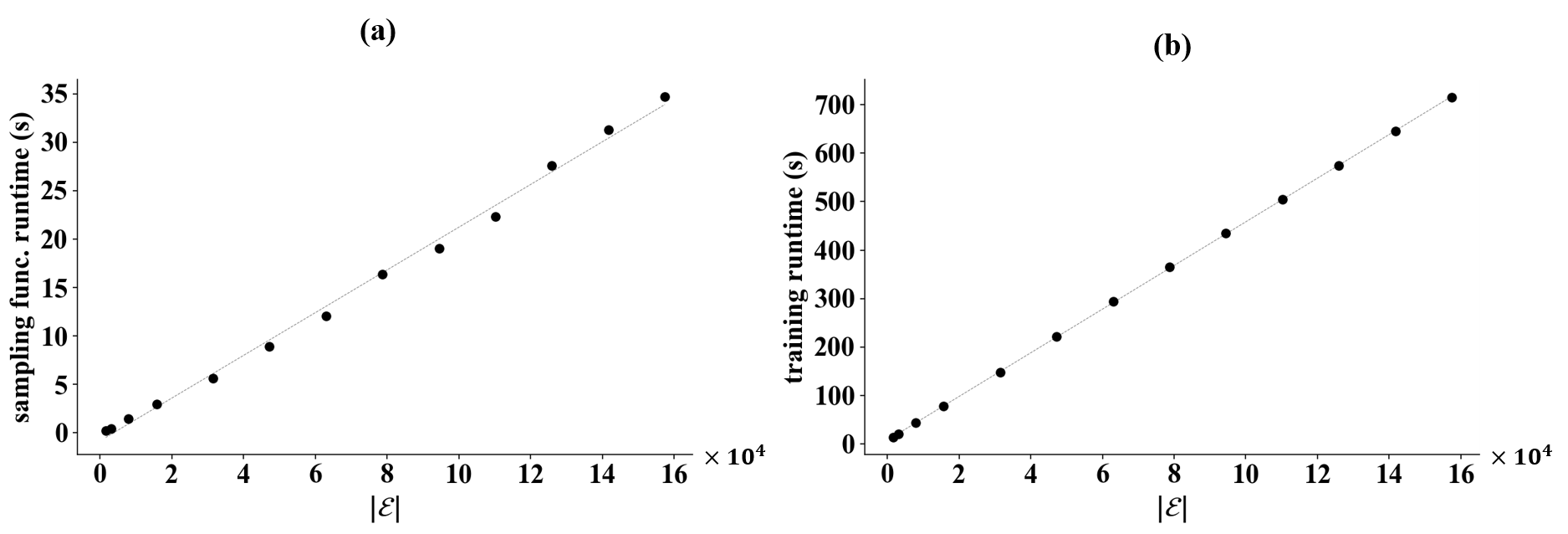}
    \vspace{-2mm}
    \caption{\small Complexity evaluation: The accumulated runtime of (a) temporal random walk extraction (Alg.\ref{alg:RWBT}) and (b) the entire CAW-N training, timed over one epoch on Wikipedia (using different $|\mathcal{E}|$ for training).}
    \label{fig:runtime}
    \vspace{-5mm}
\end{figure}

\vspace{-3mm}
\section{Conclusion}
\vspace{-2.5mm}
We proposed CAW-N to inductively represent the dynamics of temporal networks. CAW-N uses CAWs to implicitly extract network motifs via temporal random walks and adopts novel set-based anonymization to establish the correlation between network motifs. The success of CAW-N points out many promising future research directions on temporal networks: Pairing CAW-N with neural network interpretation techniques~\citep{montavon2018methods} may give a chance to automatically discover larger and meaningful motifs/patterns of temporal networks; CAW-N may also be generalized to predict high-order structures (\textit{e.g.}, triangles) that correspond to some function units of temporal networks from different domains~\citep{benson2016higher,benson2018simplicial,zitnik2019evolution}.

\hide{\jure{Great job! I suggest the following:\\
-- give conclusion\\
-- in the intro and related work you mention many challenges (scallability, \textit{etc}.), make sure to have an experiment for each of the challenges you raise in the intro/related work. And the experiment shows that your method works great.
-- overall having more experiments would be useful.
}

\jure{Polish references. Make sure you either use "Pan Li" or "P. Li" everywhere. Same for conference names --- consistently use abbreviations vs. full names}

\jure{Cite Benson Science paper and any other motifs wo
rk coming out of our group. You can also cite Marinka Robustness PNAS paper for an example of importance of motifs in biology.}
}

\subsubsection*{Acknowledgments}
We thank Jiaxuan You and Rex Ying for their helpful discussion on the idea of Causal Anonymous Walks. We also thank Rok Sosič, Camilo Andres Ruiz and Maria Brbić for providing insightful feedback on the abstract. We also gratefully acknowledge the support of DARPA under Nos. FA865018C7880 (ASED), N660011924033 (MCS); ARO under Nos. W911NF-16-1-0342 (MURI), W911NF-16-1-0171 (DURIP); NSF under Nos. OAC-1835598 (CINES), OAC-1934578 (HDR), CCF-1918940 (Expeditions), IIS-2030477 (RAPID); Stanford Data Science Initiative, Wu Tsai Neurosciences Institute, Chan Zuckerberg Biohub, Amazon, Boeing, JPMorgan Chase, Docomo, Hitachi, JD.com, KDDI, NVIDIA, Dell. J. L. is a Chan Zuckerberg Biohub investigator.

\bibliography{paper-caw}

\begin{thebibliography}{56}
\providecommand{\natexlab}[1]{#1}
\providecommand{\url}[1]{\texttt{#1}}
\expandafter\ifx\csname urlstyle\endcsname\relax
  \providecommand{\doi}[1]{doi: #1}\else
  \providecommand{\doi}{doi: \begingroup \urlstyle{rm}\Url}\fi

\bibitem[AbuOda et~al.(2019)AbuOda, Morales, and Aboulnaga]{abuoda2019link}
Ghadeer AbuOda, Gianmarco De~Francisci Morales, and Ashraf Aboulnaga.
\newblock Link prediction via higher-order motif features.
\newblock In \emph{Joint European Conference on Machine Learning and Knowledge
  Discovery in Databases}. Springer, 2019.

\bibitem[Ahmed et~al.(2015)Ahmed, Neville, Rossi, and
  Duffield]{ahmed2015efficient}
Nesreen~K Ahmed, Jennifer Neville, Ryan~A Rossi, and Nick Duffield.
\newblock Efficient graphlet counting for large networks.
\newblock In \emph{International Conference on Data Mining}. IEEE, 2015.

\bibitem[Battaglia et~al.(2018)Battaglia, Hamrick, Bapst, Sanchez-Gonzalez,
  Zambaldi, Malinowski, Tacchetti, Raposo, Santoro, Faulkner,
  et~al.]{battaglia2018relational}
Peter~W Battaglia, Jessica~B Hamrick, Victor Bapst, Alvaro Sanchez-Gonzalez,
  Vinicius Zambaldi, Mateusz Malinowski, Andrea Tacchetti, David Raposo, Adam
  Santoro, Ryan Faulkner, et~al.
\newblock Relational inductive biases, deep learning, and graph networks.
\newblock \emph{arXiv preprint arXiv:1806.01261}, 2018.

\bibitem[Benson et~al.(2016)Benson, Gleich, and Leskovec]{benson2016higher}
Austin~R Benson, David~F Gleich, and Jure Leskovec.
\newblock Higher-order organization of complex networks.
\newblock \emph{Science}, 353\penalty0 (6295), 2016.

\bibitem[Benson et~al.(2018)Benson, Abebe, Schaub, Jadbabaie, and
  Kleinberg]{benson2018simplicial}
Austin~R Benson, Rediet Abebe, Michael~T Schaub, Ali Jadbabaie, and Jon
  Kleinberg.
\newblock Simplicial closure and higher-order link prediction.
\newblock \emph{Proceedings of the National Academy of Sciences}, 115\penalty0
  (48):\penalty0 E11221--E11230, 2018.

\bibitem[Bochner(1992)]{bochner1992theorem}
Salomon Bochner.
\newblock A theorem on {F}ourier-{S}tieltjes integrals.
\newblock \emph{Collected Papers of Salomon Bochner}, 2, 1992.

\bibitem[Chen et~al.(2019)Chen, Zhang, Xu, Fu, Zhang, Zhang, and
  Xuan]{chen2019lstm}
Jinyin Chen, Jian Zhang, Xuanheng Xu, Chenbo Fu, Dan Zhang, Qingpeng Zhang, and
  Qi~Xuan.
\newblock E-lstm-d: A deep learning framework for dynamic network link
  prediction.
\newblock \emph{IEEE Transactions on Systems, Man, and Cybernetics: Systems},
  2019.

\bibitem[Du et~al.(2018)Du, Wang, Song, Lu, and Wang]{du2018dynamic}
Lun Du, Yun Wang, Guojie Song, Zhicong Lu, and Junshan Wang.
\newblock Dynamic network embedding: An extended approach for skip-gram based
  network embedding.
\newblock In \emph{International Joint Conference on Artificial Intelligence},
  2018.

\bibitem[Gorochowski et~al.(2018)Gorochowski, Grierson, and
  di~Bernardo]{gorochowski2018organization}
Thomas~E Gorochowski, Claire~S Grierson, and Mario di~Bernardo.
\newblock Organization of feed-forward loop motifs reveals architectural
  principles in natural and engineered networks.
\newblock \emph{Science advances}, 4\penalty0 (3), 2018.

\bibitem[Goyal et~al.(2020)Goyal, Chhetri, and Canedo]{goyal2020dyngraph2vec}
Palash Goyal, Sujit~Rokka Chhetri, and Arquimedes Canedo.
\newblock dyngraph2vec: Capturing network dynamics using dynamic graph
  representation learning.
\newblock \emph{Knowledge-Based Systems}, 187, 2020.

\bibitem[Granovetter(1973)]{granovetter1973strength}
Mark~S Granovetter.
\newblock The strength of weak ties.
\newblock \emph{American journal of sociology}, 78\penalty0 (6), 1973.

\bibitem[Hajiramezanali et~al.(2019)Hajiramezanali, Hasanzadeh, Narayanan,
  Duffield, Zhou, and Qian]{hajiramezanali2019variational}
Ehsan Hajiramezanali, Arman Hasanzadeh, Krishna Narayanan, Nick Duffield,
  Mingyuan Zhou, and Xiaoning Qian.
\newblock Variational graph recurrent neural networks.
\newblock In \emph{Advances in Neural Information Processing Systems}, 2019.

\bibitem[Hamilton et~al.(2017{\natexlab{a}})Hamilton, Ying, and
  Leskovec]{hamilton2017inductive}
Will Hamilton, Zhitao Ying, and Jure Leskovec.
\newblock Inductive representation learning on large graphs.
\newblock In \emph{Advances in Neural Information Processing Systems},
  2017{\natexlab{a}}.

\bibitem[Hamilton et~al.(2017{\natexlab{b}})Hamilton, Ying, and
  Leskovec]{hamilton2017representation}
William~L Hamilton, Rex Ying, and Jure Leskovec.
\newblock Representation learning on graphs: Methods and applications.
\newblock \emph{IEEE Data Engineering Bulletin}, 40\penalty0 (3),
  2017{\natexlab{b}}.

\bibitem[Holme \& Saram{\"a}ki(2012)Holme and Saram{\"a}ki]{holme2012temporal}
Petter Holme and Jari Saram{\"a}ki.
\newblock Temporal networks.
\newblock \emph{Physics reports}, 519\penalty0 (3), 2012.

\bibitem[Ivanov \& Burnaev(2018)Ivanov and Burnaev]{ivanov2018anonymous}
Sergey Ivanov and Evgeny Burnaev.
\newblock Anonymous walk embeddings.
\newblock In \emph{International Conference on Machine Learning}, 2018.

\bibitem[Kazemi et~al.(2019)Kazemi, Goel, Eghbali, Ramanan, Sahota, Thakur, Wu,
  Smyth, Poupart, and Brubaker]{kazemi2019time2vec}
Seyed~Mehran Kazemi, Rishab Goel, Sepehr Eghbali, Janahan Ramanan, Jaspreet
  Sahota, Sanjay Thakur, Stella Wu, Cathal Smyth, Pascal Poupart, and Marcus
  Brubaker.
\newblock Time2vec: Learning a vector representation of time.
\newblock \emph{arXiv preprint arXiv:1907.05321}, 2019.

\bibitem[Kipf \& Welling(2016)Kipf and Welling]{kipf2016variational}
Thomas~N Kipf and Max Welling.
\newblock Variational graph auto-encoders.
\newblock \emph{arXiv preprint arXiv:1611.07308}, 2016.

\bibitem[Kipf \& Welling(2017)Kipf and Welling]{kipf2016semi}
Thomas~N Kipf and Max Welling.
\newblock Semi-supervised classification with graph convolutional networks.
\newblock In \emph{International Conference on Learning Representations}, 2017.

\bibitem[Kovanen et~al.(2011)Kovanen, Karsai, Kaski, Kert{\'e}sz, and
  Saram{\"a}ki]{kovanen2011temporal}
Lauri Kovanen, M{\'a}rton Karsai, Kimmo Kaski, J{\'a}nos Kert{\'e}sz, and Jari
  Saram{\"a}ki.
\newblock Temporal motifs in time-dependent networks.
\newblock \emph{Journal of Statistical Mechanics: Theory and Experiment},
  2011\penalty0 (11), 2011.

\bibitem[Kovanen et~al.(2013)Kovanen, Kaski, Kert{\'e}sz, and
  Saram{\"a}ki]{kovanen2013temporal}
Lauri Kovanen, Kimmo Kaski, J{\'a}nos Kert{\'e}sz, and Jari Saram{\"a}ki.
\newblock Temporal motifs reveal homophily, gender-specific patterns, and group
  talk in call sequences.
\newblock \emph{Proceedings of the National Academy of Sciences}, 110\penalty0
  (45), 2013.

\bibitem[Kumar et~al.(2019)Kumar, Zhang, and Leskovec]{kumar2019predicting}
Srijan Kumar, Xikun Zhang, and Jure Leskovec.
\newblock Predicting dynamic embedding trajectory in temporal interaction
  networks.
\newblock In \emph{International Conference on Knowledge Discovery Data
  Mining}, 2019.

\bibitem[Lahiri \& Berger-Wolf(2007)Lahiri and
  Berger-Wolf]{lahiri2007structure}
Mayank Lahiri and Tanya~Y Berger-Wolf.
\newblock Structure prediction in temporal networks using frequent subgraphs.
\newblock In \emph{IEEE Symposium on Computational Intelligence and Data
  Mining}, 2007.

\bibitem[Last \& Penrose(2017)Last and Penrose]{last2017lectures}
G{\"u}nter Last and Mathew Penrose.
\newblock \emph{Lectures on the Poisson process}, volume~7.
\newblock Cambridge University Press, 2017.

\bibitem[Lavenberg(1983)]{lavenberg1983computer}
Stephen Lavenberg.
\newblock \emph{Computer performance modeling handbook}.
\newblock Elsevier, 1983.

\bibitem[Li \& Milenkovic(2017)Li and Milenkovic]{li2017inhomogeneous}
Pan Li and Olgica Milenkovic.
\newblock Inhomogeneous hypergraph clustering with applications.
\newblock In \emph{Advances in Neural Information Processing Systems}, 2017.

\bibitem[Li et~al.(2018)Li, Zhang, Philip, Zhang, and Yan]{li2018deep}
Taisong Li, Jiawei Zhang, S~Yu Philip, Yan Zhang, and Yonghong Yan.
\newblock Deep dynamic network embedding for link prediction.
\newblock \emph{IEEE Access}, 6, 2018.

\bibitem[Mahdavi et~al.(2018)Mahdavi, Khoshraftar, and
  An]{mahdavi2018dynnode2vec}
Sedigheh Mahdavi, Shima Khoshraftar, and Aijun An.
\newblock dynnode2vec: Scalable dynamic network embedding.
\newblock In \emph{International Conference on Big Data (Big Data)}. IEEE,
  2018.

\bibitem[Manessi et~al.(2020)Manessi, Rozza, and Manzo]{manessi2020dynamic}
Franco Manessi, Alessandro Rozza, and Mario Manzo.
\newblock Dynamic graph convolutional networks.
\newblock \emph{Pattern Recognition}, 97, 2020.

\bibitem[Mangan \& Alon(2003)Mangan and Alon]{mangan2003structure}
Shmoolik Mangan and Uri Alon.
\newblock Structure and function of the feed-forward loop network motif.
\newblock \emph{Proceedings of the National Academy of Sciences}, 100\penalty0
  (21), 2003.

\bibitem[Micali \& Zhu(2016)Micali and Zhu]{micali2016reconstructing}
Silvio Micali and Zeyuan~Allen Zhu.
\newblock Reconstructing markov processes from independent and anonymous
  experiments.
\newblock \emph{Discrete Applied Mathematics}, 200, 2016.

\bibitem[Montavon et~al.(2018)Montavon, Samek, and
  M{\"u}ller]{montavon2018methods}
Gr{\'e}goire Montavon, Wojciech Samek, and Klaus-Robert M{\"u}ller.
\newblock Methods for interpreting and understanding deep neural networks.
\newblock \emph{Digital Signal Processing}, 73:\penalty0 1--15, 2018.

\bibitem[Nguyen et~al.(2018)Nguyen, Lee, Rossi, Ahmed, Koh, and
  Kim]{nguyen2018continuous}
Giang~Hoang Nguyen, John~Boaz Lee, Ryan~A Rossi, Nesreen~K Ahmed, Eunyee Koh,
  and Sungchul Kim.
\newblock Continuous-time dynamic network embeddings.
\newblock In \emph{Companion Proceedings of the The Web Conference}, 2018.

\bibitem[Paranjape et~al.(2017)Paranjape, Benson, and
  Leskovec]{paranjape2017motifs}
Ashwin Paranjape, Austin~R Benson, and Jure Leskovec.
\newblock Motifs in temporal networks.
\newblock In \emph{International Conference on Web Search and Data Mining},
  2017.

\bibitem[Pareja et~al.(2020)Pareja, Domeniconi, Chen, Ma, Suzumura, Kanezashi,
  Kaler, Schardl, and Leiserson]{pareja2020evolvegcn}
Aldo Pareja, Giacomo Domeniconi, Jie Chen, Tengfei Ma, Toyotaro Suzumura,
  Hiroki Kanezashi, Tim Kaler, Tao~B Schardl, and Charles~E Leiserson.
\newblock Evolve{GCN}: Evolving graph convolutional networks for dynamic
  graphs.
\newblock In \emph{AAAI Conference on Artificial Intelligence}, 2020.

\bibitem[Rahman \& Al~Hasan(2016)Rahman and Al~Hasan]{rahman2016link}
Mahmudur Rahman and Mohammad Al~Hasan.
\newblock Link prediction in dynamic networks using graphlet.
\newblock In \emph{Joint European Conference on Machine Learning and Knowledge
  Discovery in Databases}. Springer, 2016.

\bibitem[Rossi et~al.(2020)Rossi, Chamberlain, Frasca, Eynard, Monti, and
  Bronstein]{rossi2020temporal}
Emanuele Rossi, Ben Chamberlain, Fabrizio Frasca, Davide Eynard, Federico
  Monti, and Michael Bronstein.
\newblock Temporal graph networks for deep learning on dynamic graphs.
\newblock \emph{arXiv preprint arXiv:2006.10637}, 2020.

\bibitem[Rossi et~al.(2019)Rossi, Rao, Kim, Koh, Ahmed, and
  Wu]{rossi2019higher}
Ryan~A Rossi, Anup Rao, Sungchul Kim, Eunyee Koh, Nesreen~K Ahmed, and Gang Wu.
\newblock Higher-order ranking and link prediction: From closing triangles to
  closing higher-order motifs.
\newblock \emph{arXiv preprint arXiv:1906.05059}, 2019.

\bibitem[Rumelhart et~al.(1986)Rumelhart, Hinton, and
  Williams]{rumelhart1986learning}
David~E Rumelhart, Geoffrey~E Hinton, and Ronald~J Williams.
\newblock Learning representations by back-propagating errors.
\newblock \emph{nature}, 323\penalty0 (6088), 1986.

\bibitem[Sankar et~al.(2020)Sankar, Wu, Gou, Zhang, and Yang]{sankar2020dysat}
Aravind Sankar, Yanhong Wu, Liang Gou, Wei Zhang, and Hao Yang.
\newblock Dy{SAT}: Deep neural representation learning on dynamic graphs via
  self-attention networks.
\newblock In \emph{International Conference on Web Search and Data Mining},
  2020.

\bibitem[Sarkar et~al.(2012)Sarkar, Chakrabarti, and
  Jordan]{sarkar2012nonparametric}
Purnamrita Sarkar, Deepayan Chakrabarti, and Michael~I Jordan.
\newblock Nonparametric link prediction in dynamic networks.
\newblock In \emph{International Conference on Machine Learning}, 2012.

\bibitem[Scarselli et~al.(2008)Scarselli, Gori, Tsoi, Hagenbuchner, and
  Monfardini]{scarselli2008graph}
Franco Scarselli, Marco Gori, Ah~Chung Tsoi, Markus Hagenbuchner, and Gabriele
  Monfardini.
\newblock The graph neural network model.
\newblock \emph{IEEE Transactions on Neural Networks}, 20\penalty0 (1), 2008.

\bibitem[Simmel(1950)]{simmel1950sociology}
Georg Simmel.
\newblock \emph{The sociology of georg simmel}, volume 92892.
\newblock Simon and Schuster, 1950.

\bibitem[Singer et~al.(2019)Singer, Guy, and Radinsky]{singer2019node}
Uriel Singer, Ido Guy, and Kira Radinsky.
\newblock Node embedding over temporal graphs.
\newblock In \emph{International Joint Conference on Artificial Intelligence},
  2019.

\bibitem[Srinivasan \& Ribeiro(2019)Srinivasan and
  Ribeiro]{srinivasan2019equivalence}
Balasubramaniam Srinivasan and Bruno Ribeiro.
\newblock On the equivalence between positional node embeddings and structural
  graph representations.
\newblock In \emph{International Conference on Learning Representations}, 2019.

\bibitem[Toivonen et~al.(2007)Toivonen, Kumpula, Saram{\"a}ki, Onnela,
  Kert{\'e}sz, and Kaski]{toivonen2007role}
Riitta Toivonen, Jussi~M Kumpula, Jari Saram{\"a}ki, Jukka-Pekka Onnela,
  J{\'a}nos Kert{\'e}sz, and Kimmo Kaski.
\newblock The role of edge weights in social networks: modelling structure and
  dynamics.
\newblock In \emph{Noise and Stochastics in Complex Systems and Finance},
  volume 6601. International Society for Optics and Photonics, 2007.

\bibitem[Trivedi et~al.(2017)Trivedi, Dai, Wang, and Song]{trivedi2017know}
Rakshit Trivedi, Hanjun Dai, Yichen Wang, and Le~Song.
\newblock Know-evolve: deep temporal reasoning for dynamic knowledge graphs.
\newblock In \emph{International Conference on Machine Learning}, 2017.

\bibitem[Trivedi et~al.(2019)Trivedi, Farajtabar, Biswal, and
  Zha]{trivedi2019dyrep}
Rakshit Trivedi, Mehrdad Farajtabar, Prasenjeet Biswal, and Hongyuan Zha.
\newblock Dyrep: Learning representations over dynamic graphs.
\newblock In \emph{International Conference on Learning Representation}, 2019.

\bibitem[Vaswani et~al.(2017)Vaswani, Shazeer, Parmar, Uszkoreit, Jones, Gomez,
  Kaiser, and Polosukhin]{vaswani2017attention}
Ashish Vaswani, Noam Shazeer, Niki Parmar, Jakob Uszkoreit, Llion Jones,
  Aidan~N Gomez, {\L}ukasz Kaiser, and Illia Polosukhin.
\newblock Attention is all you need.
\newblock In \emph{Advances in Neural Information Processing Systems}, 2017.

\bibitem[Veli{\v{c}}kovi{\'c} et~al.(2018)Veli{\v{c}}kovi{\'c}, Cucurull,
  Casanova, Romero, Li{\`o}, and Bengio]{velivckovic2018graph}
Petar Veli{\v{c}}kovi{\'c}, Guillem Cucurull, Arantxa Casanova, Adriana Romero,
  Pietro Li{\`o}, and Yoshua Bengio.
\newblock Graph attention networks.
\newblock In \emph{International Conference on Learning Representations}, 2018.

\bibitem[Wang et~al.(2020)Wang, Li, Bai, Subrahmanian, and
  Leskovec]{wanggeneric}
Yanbang Wang, Pan Li, Chongyang Bai, VS~Subrahmanian, and Jure Leskovec.
\newblock Generic representation learning for dynamic social interaction.
\newblock \emph{International Conference on Knowledge Discovery Data Mining,
  MLG workshop}, 2020.

\bibitem[Xu et~al.(2019)Xu, Ruan, Korpeoglu, Kumar, and Achan]{xu2019self}
Da~Xu, Chuanwei Ruan, Evren Korpeoglu, Sushant Kumar, and Kannan Achan.
\newblock Self-attention with functional time representation learning.
\newblock In \emph{Advances in Neural Information Processing Systems}, 2019.

\bibitem[Xu et~al.(2020)Xu, Ruan, Korpeoglu, Kumar, and Achan]{xu2020inductive}
Da~Xu, Chuanwei Ruan, Evren Korpeoglu, Sushant Kumar, and Kannan Achan.
\newblock Inductive representation learning on temporal graphs.
\newblock In \emph{International Conference on Learning Representation}, 2020.

\bibitem[Zhang et~al.(2020)Zhang, Li, Xia, Wang, and Jin]{zhang2020revisiting}
Muhan Zhang, Pan Li, Yinglong Xia, Kai Wang, and Long Jin.
\newblock Revisiting graph neural networks for link prediction.
\newblock \emph{arXiv preprint arXiv:2010.16103}, 2020.

\bibitem[Zhou et~al.(2018)Zhou, Yang, Ren, Wu, and Zhuang]{zhou2018dynamic}
Le-kui Zhou, Yang Yang, Xiang Ren, Fei Wu, and Yueting Zhuang.
\newblock Dynamic network embedding by modeling triadic closure process.
\newblock In \emph{AAAI Conference on Artificial Intelligence}, 2018.

\bibitem[Zitnik et~al.(2019)Zitnik, Feldman, Leskovec,
  et~al.]{zitnik2019evolution}
Marinka Zitnik, Marcus~W Feldman, Jure Leskovec, et~al.
\newblock Evolution of resilience in protein interactomes across the tree of
  life.
\newblock \emph{Proceedings of the National Academy of Sciences}, 116\penalty0
  (10), 2019.

\end{thebibliography}
\bibliographystyle{iclr2021_conference}
\newpage
\appendix
\section{Efficient link Sampling} \label{apd:sampling}

Our efficient link sampling strategy contains two subroutines -- Online probability computation (Alg.\ref{alg:onlineprob}) and Iterative sampling (Alg.\ref{alg:itersample}). The Online probability computation subroutine~Alg.\ref{alg:onlineprob} essentially works online to assign each new incoming link $(\{u,v\},t)$ with a pair of probabilities $\{p_{u,t},p_{v,t}\}$ such that 
\begin{align}\label{eq:prob}
    p_{u,t} = \frac{\exp(\alpha t)}{\sum_{(e,t')\in E_{u,t}}\exp(\alpha t')}, \quad p_{v,t} = \frac{\exp(\alpha t)}{\sum_{(e,t')\in E_{v,t}}\exp(\alpha t')}.
\end{align}
These probabilities will be used later in sampling (Alg.\ref{alg:itersample}) and do not need to be updated any more. 

\begin{algorithm}[h]
\SetKwInOut{Input}{Input}\SetKwInOut{Output}{Output}
Initialize $V\leftarrow \emptyset$, $\Omega \leftarrow \emptyset$\;
\For{$(\{u,v\},t)\in \mathcal{E}$}{
\For{$w\in \{u,v\}$}{
\If{$w\not\in V$}{
$V\leftarrow V\cup\{w\}$\;
$P_w \leftarrow \exp(\alpha t)$\;
}\Else{
Find $P_w \in \Omega$\;
$P_w \leftarrow P_w + \exp(\alpha t)$\;
}
$p_{w,t}\leftarrow \frac{\exp(\alpha t)}{P_w}$\;
$\Omega \leftarrow \Omega \cup\{P_w\}$\;
}
Assign two probability scores: $(\{(u, p_{u,t} ),(v,p_{v,t})\},t,) \leftarrow (\{u,v\},t)$ \;
}
\caption{Online probability computation ($G$, $\alpha$)}\label{alg:onlineprob}
\end{algorithm}

The Iterative sampling subroutine Alg.\ref{alg:itersample} is an efficient implementation of step 5 in Alg.\ref{alg:RWBT}. We may first show that the sampling probability of a link $(e, t)$ in $E_{w_{\text{p}},t_{\text{p}}}$ is proportional to $\exp(\alpha (t-t_{\text{p}}))$ in Prop.\ref{prop:sampling-prob}. 

\begin{algorithm}[h]
\SetKwInOut{Input}{Input}\SetKwInOut{Output}{Output}
Initialize $V\leftarrow \emptyset$, $\Omega \leftarrow \emptyset$\;
\For{$(e,t) \in \mathcal{E}_{w_{\text{p}},t_{\text{p}}}$ with an decreasing order of $t$}{
Sample $a\sim$ Unif$[0,1]$\;
$p_{w_{\text{p}},t}$ is the score of this link related to $w_{\text{p}}$ obtained from Alg.\ref{alg:onlineprob}\;
\If{$a< p_{w_{\text{p}},t}$}{
Return $(e,t)$\;
}
}
Return $(\{w_{\text{p}}, X\}, t_X)$\;
\caption{Iterative Sampling $(\mathcal{E},\alpha,w_{\text{p}},t_{\text{p}})$}\label{alg:itersample}
\end{algorithm}

\begin{proposition} \label{prop:sampling-prob}
Based on the probabilities (Eq.\ref{eq:prob}) pre-computed by Alg.\ref{alg:onlineprob}, Alg.\ref{alg:itersample} will sample a link $(e, t)$ in $\mathcal{E}_{w_{\text{p}},t_{\text{p}}}$ with probability proportional to $\exp(\alpha (t-t_{\text{p}}))$.
\end{proposition}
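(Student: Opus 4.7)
The strategy will be to enumerate the links of $\mathcal{E}_{w_{\text{p}},t_{\text{p}}}$ in the temporal order that Alg.~\ref{alg:itersample} visits them -- say $(e_1, t_1), \ldots, (e_n, t_n)$ with $t_1 > t_2 > \cdots > t_n$ -- and compute in closed form the probability that each $(e_i, t_i)$ is returned. Introduce the shorthand $S_i \triangleq \sum_{j=i}^{n} \exp(\alpha t_j)$. The first step is to identify the precomputed score $p_{w_{\text{p}}, t_i}$ from Alg.~\ref{alg:onlineprob} with $\exp(\alpha t_i)/S_i$. This follows by a short induction on the order in which Alg.~\ref{alg:onlineprob} consumes the event stream: the algorithm processes edges chronologically (earliest first), and $t_n$ is the oldest timestamp among links attached to $w_{\text{p}}$ before $t_{\text{p}}$, so the running accumulator $P_{w_{\text{p}}}$ is exactly $S_i$ at the moment it processes $(e_i, t_i)$, producing the stored score $\exp(\alpha t_i)/S_i$.

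\textbf{Key calculation.} Alg.~\ref{alg:itersample} returns $(e_i, t_i)$ if and only if it independently rejects each of $(e_1, t_1), \ldots, (e_{i-1}, t_{i-1})$ and then accepts $(e_i, t_i)$, which gives
\[
\Pr[\text{return }(e_i, t_i)] = \left(\prod_{j=1}^{i-1}\bigl(1 - p_{w_{\text{p}}, t_j}\bigr)\right) p_{w_{\text{p}}, t_i}.
\]
The crux of the argument is then a telescoping substitution: since $1 - p_{w_{\text{p}}, t_j} = 1 - \exp(\alpha t_j)/S_j = S_{j+1}/S_j$, the product collapses to $S_i/S_1$, and multiplying by $p_{w_{\text{p}}, t_i} = \exp(\alpha t_i)/S_i$ yields $\Pr[\text{return }(e_i, t_i)] = \exp(\alpha t_i)/S_1$. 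Because $S_1$ is independent of $i$ and $t_{\text{p}}$ is fixed, $\exp(\alpha t_i) \propto \exp(\alpha (t_i - t_{\text{p}}))$, which is precisely the desired proportionality.

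\textbf{Main obstacle.} The only real subtlety is the fallback return at the end of Alg.~\ref{alg:itersample}, which would fire if every Bernoulli trial rejected. I plan to rule this out by observing that the overall rejection probability is $\prod_{j=1}^{n}(1-p_{w_{\text{p}}, t_j}) = S_{n+1}/S_1 = 0$, using $S_{n+1}=0$; equivalently, the oldest link attached to $w_{\text{p}}$ satisfies $p_{w_{\text{p}}, t_n}=1$ by the inductive identity, so it is always accepted if reached. Hence whenever $\mathcal{E}_{w_{\text{p}},t_{\text{p}}}$ is nonempty the fallback is never exercised and the probabilities above sum to one. A minor bookkeeping remark is the notational gap between Eq.~\ref{eq:prob}, whose denominator is written over times strictly before $t$, and the actual implementation in Alg.~\ref{alg:onlineprob}, which folds $\exp(\alpha t)$ itself into $P_w$ prior to dividing; the proof should simply adopt the algorithm's convention, which is exactly what makes the inductive identity $P_{w_{\text{p}}} = S_i$ valid.
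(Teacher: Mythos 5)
Your proposal is correct and follows essentially the same argument as the paper's proof: write the probability of returning a given link as the product of the rejection probabilities of all more recent links times its acceptance probability, then telescope using $1-p_{w_{\text{p}},t_j}=S_{j+1}/S_j$, which is exactly the paper's computation in set-sum notation. Your explicit treatment of the fallback return (via $p_{w_{\text{p}},t_n}=1$) and of the convention that the denominator includes $\exp(\alpha t)$ itself are minor tidy-ups the paper leaves implicit, not a different route.
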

\begin{proof}
To show this, we first order the timestamps of links in $\mathcal{E}_{w_{\text{p}},t_{\text{p}}}$ between $[t, t_{\text{p}})$ as $t=t'_0<t'_1 < t'_2<\cdots <t'_k<t_{\text{p}}$ where there exists an link $(e', t_i')\in \mathcal{E}_{w_{\text{p}},t_{\text{p}}}$ for $t_i'$. Then, the probability to sample a link $(e,t)\in \mathcal{E}_{w_{\text{p}},t_{\text{p}}}$ satisfies 
\begin{align*}
    p&=p_{w_{\text{p}},t}\times \prod_{i=1}^k \left(1-p_{w_{\text{p}},t_i'}\right)\\
    &= \frac{\exp(\alpha t)}{\sum_{(e',t')\in \mathcal{E}_{w_{\text{p}},t}}\exp{(\alpha t')}}\times \prod_{i=1}^k \frac{\sum_{(e',t')\in \mathcal{E}_{w_{\text{p}},t'_{i-1}}}\exp(\alpha t')}{\sum_{(e',t')\in \mathcal{E}_{w_{\text{p}},t'_i}}\exp(\alpha t')}\\
    &=\frac{\exp(\alpha t)}{\sum_{(e',t')\in \mathcal{E}_{w_{\text{p}},t_{\text{p}}}}\exp{(\alpha t')}}=\frac{\exp(\alpha (t-t_{\text{p}}))}{\sum_{(e',t')\in \mathcal{E}_{w_{\text{p}},t_{\text{p}}}}\exp{(\alpha (t'-t_{\text{p}}))}},
\end{align*}
which is exactly the probability that we need. 
\end{proof}

We have the following Thm.\ref{thm:sample} with a weak assumption that evaluates the complexity of Alg.\ref{alg:itersample}. We assume that links come in by following a Poisson point process with intensity $\tau$, which is a frequently used assumption to model communication networks~\citep{lavenberg1983computer}. This result indicates that if $\alpha>0$, for each node, we only need to record the most recent $O(\frac{\tau}{\alpha})$ links to sample. This result is important as it means our method can do online training and inference with time and memory complexity that are not related to the total number of links. 

\begin{theorem}\label{thm:sample}
If the links that are connected to $w_{\text{p}}$ appear by following a Poisson point process with intensity $\tau$. Then, the expected number of iterations of Alg.3 is bounded by $\min\{\frac{2\tau}{\alpha} + 1,|\mathcal{E}_{w_{\text{p}},t_{\text{p}}}|\}$. 
\end{theorem}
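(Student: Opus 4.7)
The argument naturally splits into a deterministic bound and a probabilistic one, matching the two terms in the $\min$. The deterministic half $\mathbb{E}[N] \le |\mathcal{E}_{w_{\text{p}}, t_{\text{p}}}|$ is immediate because the outer \textbf{for} loop of Alg.\ref{alg:itersample} scans $\mathcal{E}_{w_{\text{p}}, t_{\text{p}}}$ (in decreasing order of $t$) at most once. I will spend the rest of the plan on the probabilistic bound $\mathbb{E}[N] \le 2\tau/\alpha + 1$.

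Set up coordinates by relabelling the candidates in scan order by their ages $s_i := t_{\text{p}} - t_{(i)}$, so that under the Poisson hypothesis, $0 < s_1 < s_2 < \cdots$ are the arrival times of a rate-$\tau$ Poisson point process on $[0, \infty)$. The telescoping identity derived inside the proof of Prop.\ref{prop:sampling-prob} shows that, conditional on the process,
\[
\Pr[N \ge k \mid \text{process}] \;=\; \prod_{i<k}\!\bigl(1 - p_{w_{\text{p}}, t_i}\bigr) \;=\; \frac{Z_k}{Z_1}, \qquad Z_k := \sum_{j \ge k} e^{-\alpha s_j}.
\]
Summing over $k$ via $\mathbb{E}[N] = \sum_{k \ge 1} \Pr[N \ge k]$ reduces the problem to bounding $\sum_{k \ge 2} \mathbb{E}[Z_k/Z_1]$.

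The main step is to exploit the memoryless/strong Markov property of the Poisson process to decouple the numerator and denominator of the ratio $Z_k/Z_1$. Factoring out $e^{-\alpha s_1}$ yields, for $k \ge 2$, the identity $Z_k/Z_1 = \widetilde Z_{k-1}/(1 + \widetilde W)$, where $\widetilde Z_m := \sum_{j \ge m} e^{-\alpha \widetilde s_j}$ and $\widetilde W := \widetilde Z_1$ are functionals of the shifted process $\widetilde s_j := s_{j+1} - s_1$, which is again rate-$\tau$ Poisson on $[0, \infty)$. By Campbell's formula, $\mathbb{E}[\widetilde W] = \tau/\alpha$ and $\mathrm{Var}(\widetilde W) = \tau/(2\alpha)$, so $\widetilde W$ concentrates around $\tau/\alpha$ when $\tau/\alpha$ is large. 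On the typical event $\{\widetilde W \ge \tfrac{1}{2}\tau/\alpha\}$ the denominator is at least of order $\tau/\alpha$, so $\mathbb{E}[Z_k/Z_1 ; \text{typical}] \lesssim (\alpha/\tau)\, \mathbb{E}[\widetilde Z_{k-1}]$; on the complementary event, whose probability is controlled by Chebyshev's inequality, I use the trivial bound $Z_k/Z_1 \le 1$ together with the geometric decay of $\mathbb{E}[\widetilde Z_{k-1}] = r^{k-1}/(1-r)$ (with $r = \tau/(\tau+\alpha)$) to close the sum. Together these produce $\sum_{k \ge 2} \mathbb{E}[Z_k/Z_1] \le 2\tau/\alpha$, hence the claim.

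The main obstacle is the coupling between $Z_k$ and $Z_1$ through their shared Poisson points, which prevents any direct geometric-distribution argument. The naive inequality $Z_1 \ge e^{-\alpha s_1}$ gives only $\mathbb{E}[N] \le 1 + \tau(\tau+\alpha)/\alpha^2$, which scales as $(\tau/\alpha)^2$ in the regime $\tau \gg \alpha$ and is much weaker than the theorem's claim. The Poisson-memoryless decomposition $Z_1 = e^{-\alpha s_1}(1 + \widetilde W)$, paired with the concentration of the shot-noise $\widetilde W$ around its mean $\tau/\alpha$, is what restores the correct $\tau/\alpha$ scaling; the constant $2$ appears as slack absorbing the contributions of the typical and atypical events and is not structural.
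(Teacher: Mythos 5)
Your set-up is sound: the deterministic bound, the conditional survival identity $\Pr[N\ge k\mid\text{process}]=\prod_{i<k}(1-p_{w_{\text{p}},t_i})=Z_k/Z_1$, the shift decomposition $Z_k/Z_1=\widetilde Z_{k-1}/(1+\widetilde W)$, and the moment computations for the shot noise $\widetilde W$ are all correct. The gap is in the closing ``typical/atypical'' step, which as sketched cannot produce the stated constant. Write $S:=\sum_{k\ge 2}\widetilde Z_{k-1}$, so you must show $\mathbb{E}\bigl[S/(1+\widetilde W)\bigr]\le 2\tau/\alpha$. On your typical event $\{\widetilde W\ge \tfrac12\tau/\alpha\}$ the best you can extract is $\mathbb{E}[S]/(1+\tfrac{\tau}{2\alpha})$ with $\mathbb{E}[S]=\sum_{m\ge1}r^{m}/(1-r)=\tau(\tau+\alpha)/\alpha^2$, i.e.\ $\tfrac{2\tau(\tau+\alpha)}{\alpha(\tau+2\alpha)}=\tfrac{2\tau}{\alpha}-2+O(\alpha/\tau)$: the halved threshold already spends essentially the entire factor-$2$ budget, leaving an additive margin below $2$ for the atypical event. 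But Chebyshev only gives $\Pr[\widetilde W<\tfrac12\tau/\alpha]=O(\alpha/\tau)$, and none of the natural ways to pay for the atypical part fit in that margin: Cauchy--Schwarz against $\mathbb{E}[S^2]\approx\tau^4/\alpha^4$ gives $\Theta((\tau/\alpha)^{3/2})$; bounding by $\mathbb{E}[S]\Pr[A^c]$ gives another $\approx 2\tau/\alpha$; and the term-by-term combination you describe (trivial bound $Z_k/Z_1\le1$ versus the geometric decay of $\mathbb{E}[\widetilde Z_{k-1}]=r^{k-1}/(1-r)$) still yields $\Theta(\log(\tau/\alpha))$, since roughly $\tfrac{\tau}{\alpha}\log\tfrac{\tau}{\alpha}$ terms must be charged at $\Pr[A^c]$. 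So at best you obtain $2\tau/\alpha+\Theta(\log(\tau/\alpha))$, not $2\tau/\alpha+1$. Moreover, in the regime $\tau\lesssim\alpha$ (which the theorem also covers, and where the bound is barely above $1$) the shot noise has mean $\tau/\alpha\lesssim1$ and variance $\tau/(2\alpha)$, so it does not concentrate at all and the dichotomy collapses; a separate argument is needed there. Your closing remark that the constant $2$ is ``not structural'' slack is also optimistic: the truth is $\approx\tau/\alpha+1$, so there is only a single factor of $2$ of room, and a split that loses a factor $2$ on the typical event alone leaves nothing for the rest.

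The paper closes the bound without any concentration and uniformly in $\tau/\alpha$: in your coordinates, it bounds each term $\mathbb{E}\bigl[\Pr[N\ge k\mid\text{process}]\bigr]$ by discarding all but the first $k$ summands of the denominator, writing everything as ratios of products of the i.i.d.\ variables $e^{-\alpha(\text{gap})}$ with mean $\psi=\tau/(\tau+\alpha)$, and applying Jensen's inequality (concavity of $x\mapsto x/(c_1+c_2x)$) to replace each such variable by $\psi$; this gives the per-term estimate $j\,\psi^{j-1}/\sum_{i=0}^{j-1}\psi^{i}\le[1+(j-1)(1-\psi)]\psi^{j-1}$, and summing the geometric series lands exactly on $2\tau/\alpha+1$. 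If you want to salvage your route, replace the typical/atypical split by this kind of term-by-term Jensen argument on $\widetilde Z_{k-1}/(1+\widetilde W)$ after truncating the denominator to its first $k-1$ summands; the concentration machinery is neither needed nor sufficient here.
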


\begin{proof}
The number of iterations of Alg.\ref{alg:itersample} is always bounded by $|\mathcal{E}_{w_{\text{p}},t_{\text{p}}}|$. So we only need to prove that the expected number of interactions of Alg.\ref{alg:itersample} is bounded by $\frac{2\tau}{\alpha}+1$. 

To show this, we order the timestamps of links in $\mathcal{E}_{w_{\text{p}},t_{\text{p}}}$ between $[0, t_{\text{p}})$ as $0=t_0<t_1 < t_2<\cdots <t_k<t_{\text{p}}$ where there exists an link $(e, t_i)\in \mathcal{E}_{w_{\text{p}},t_{\text{p}}}$ for $t_i$. Further define we define $Z_i = \exp(\alpha(t_i -t_{i-1}))$ for $i\in[1,k]$.

Due to the definition of Poisson process, we know that each time difference in $\{t_i -t_{i-1}\}_{1\leq i\leq k}$ follows i.i.d. exponential distribution with parameter $\tau$. Therefore, $\{Z_i\}_{1\leq i\leq k}$ are also i.i.d.. Let $\psi = \mathbb{E}(Z_i^{-1})= \frac{\tau}{\alpha + \tau}$.

The probability that Alg.\ref{alg:itersample} runs $j$ iterations is equal to the probability that the link with timestamp $t_{k+1-j})$ gets sampled. That is
\begin{align*}
    \mathbb{P}(\text{iter}=j) = \frac{\prod_{i=1}^{k+1-j} Z_i}{\sum_{h=1}^k\prod_{i=1}^{k+1-h} Z_i}.
\end{align*}

Therefore, the expected number of iterations is 
\begin{align} \label{eq:apd-iternum}
    \mathbb{E}(\text{iter}) = 
    \mathbb{E}\left[\frac{\sum_{j=1}^{k}j\prod_{i=1}^{k+1-j} Z_i}{\sum_{h=1}^k\prod_{i=1}^{k+1-h} Z_i}\right]=\sum_{j=1}^{k}j\mathbb{E}\left[\frac{\prod_{i=1}^{j-1} Z_i'}{\sum_{h=1}^k\prod_{i=1}^{h-1} Z_i'}\right].
\end{align}
where $Z_i' = Z_{k+1-i}^{-1}$ and $\prod_{i=1}^{0} Z_i' = 1$. Next we will prove that each item in right-hand-side of Eq.\ref{eq:apd-iternum} satisfies 
\begin{align}\label{eq:apd-ineq}
    j\mathbb{E}\left[\frac{\prod_{i=1}^{j-1} Z_i'}{\sum_{h=1}^k\prod_{i=1}^{h-1} Z_i'}\right]\leq [1 + (j-1)(1-\psi)] \psi^{j-1}.
\end{align}
If this is true, then 
\begin{align*} 
 &   \mathbb{E}(\text{iter}) \leq \sum_{j=1}^k [1 + (j-1)(1-\psi)] \psi^{j-1} =\sum_{j=1}^k \psi^{j-1} + \sum_{j=1}^k (j-1) (1-\psi)\psi^{j-1} \\
&\leq \frac{1}{1-\psi} +  \frac{\psi}{1-\psi} = \frac{2\tau}{\alpha}+1.
\end{align*}

Now, let us prove Eq.\ref{eq:apd-ineq}. For $j=1$, Eq.\ref{eq:apd-ineq} is trivial. For $j>1$, 
\begin{align}\label{eq:apd-ineq1}
\mathbb{E}\left[\frac{\prod_{i=1}^{j-1} Z_i'}{\sum_{h=1}^k\prod_{i=1}^{h-1} Z_i'}\right] \leq \mathbb{E}\left[\frac{\prod_{i=1}^{j-1} Z_i'}{1 + \sum_{h=2}^j\prod_{i=1}^{h-1} Z_i'}\right]\leq \frac{\prod_{i=1}^{j-1} \mathbb{E}(Z_i')}{1 + \sum_{h=2}^j\prod_{i=1}^{h-1} \mathbb{E}(Z_i')}= \frac{\psi^{j-1}}{\sum_{i=0}^{j-1}\psi^i},
\end{align}
where the second inequality is due to the Jensen's inequality and the fact that for any positive $c_1, c_2$, $\frac{x}{c_1 + c_2 x}$ is concave with respect to $x$. Moreover, we also have 
\begin{align}\label{eq:apd-ineq2}
    [1+(j-1)(1-\psi)]\sum_{i=0}^{j-1}\psi^i = j + \sum_{i=1}^{j-1}\psi^i - (j-1)\psi^{j} \geq j.
\end{align}
Combining Eq.\ref{eq:apd-ineq1} and Eq.\ref{eq:apd-ineq2}, we prove Eq.\ref{eq:apd-ineq}, which concludes the proof.
\end{proof}

\section{Tree-structured sampling}

We may further decrease the sampling complexity by revising Alg.~\ref{alg:RWBT} into tree-structured sampling. Alg.~\ref{alg:RWBT} originally requires to sample link $Mm$ times because we need to search $M$ links that connected to $w_0$ in the first step and then sample one link for each of the $M$ nodes in each following step. A tree-structured sampling strategy may reduce this number: Specifically, we sample $k_i$ links for each node in step $i$ but we make sure $\prod_{i=1}^{m}k_i=M$, which does not change the total number of walks. In this way, the times of link search decrease to $\sum_{i=1}^{m}k_1k_2...k_i$. Suppose $M=64$, $m=3$, and $k_1=4, k_2=4, k_3=4$, then the times of link search decrease to about $0.44Mm$ . Though empirical results below show that tree-structured sampling achieves slightly worse performance, it provides an opportunity to tradeoff between prediction performance and time complexity.

\begin{figure}[h]
    \vspace{-2mm}
    \centering
    \includegraphics[scale=0.25]{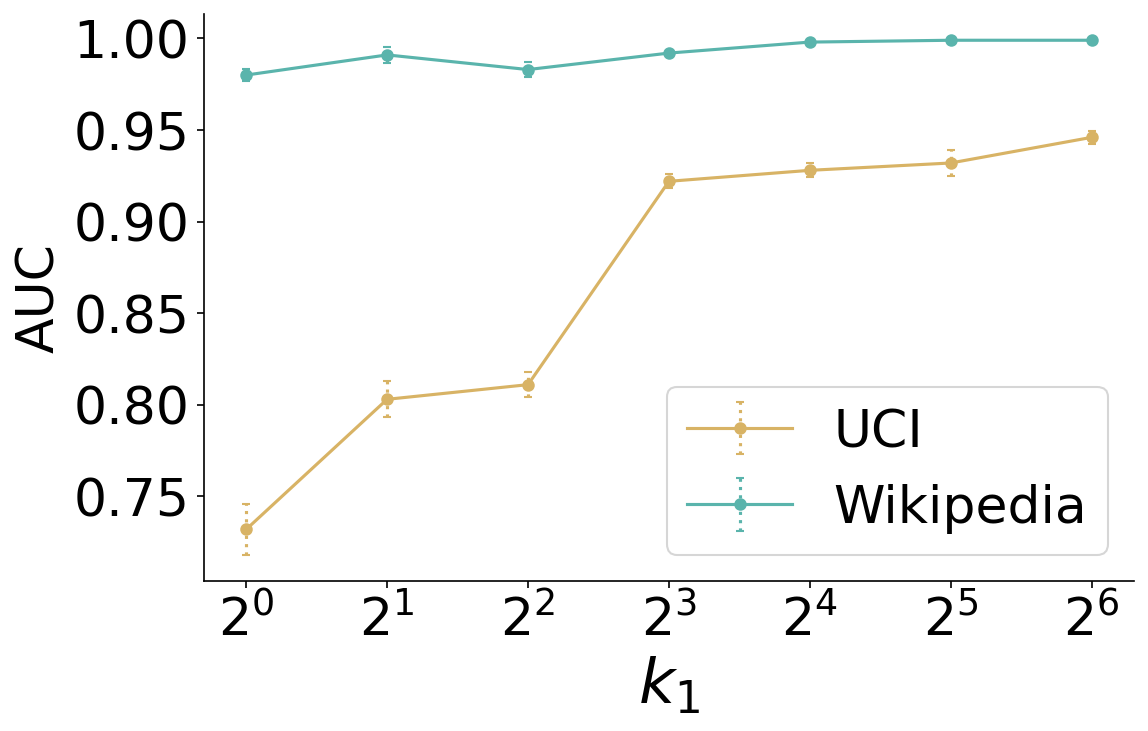}
    \vspace{-3mm}
    \caption{\small Effect of sampling of different tree structures on inductive performance.}
    \label{fig:tree_shape}
    \vspace{-3mm}
\end{figure}
We conduct more experiment to investigate this topic with Wikipedia and UCI datasets. The setup is as follows: first, we fix CAW sampling number $M=64=2^6$ and length $m=2$, so that we always have $k_1k_2=M=2^6$; next, we assign different values to $k_1$, so that the shape of the tree changes accordingly; controlling other hyperparameters to be the optimal combination found by grid search, we plot the corresponding inductive AUC scores of \caws on all testing edges in Fig. \ref{fig:tree_shape}. It is observed that while tree-structured sampling may affect the performance to some extent, its negative impact is less prominent when the first-step sampling number $k_1$ is relatively large, and our model still achieves state-of-the-art performance compared to our baselines. That makes the tree-structured sampling a reasonable strategy that can further reduce time complexity.

\section{Additional Experimental Setup Details} \label{sec:addtional_setup}


\subsection{Dataset Introduction and Access} \label{adp:dataset} 

We list the introduction of the six datasets as follows.

\begin{itemize}
\item Reddit\footnote{ \url{http://snap.stanford.edu/jodie/reddit.csv}} is a dataset of posts made by users on subredditts over a month. Its nodes are users and posts, and its links are the timestamped posting requests.
\item Wikipedia\footnote{\url{http://snap.stanford.edu/jodie/wikipedia.csv}} is a dataset of edits over wiki pages over a month, whose nodes represent human editors and wiki pages and whose links represent timestamped edits.
\item Social Evolution\footnote{\url{http://realitycommons.media.mit.edu/socialevolution.html}} is a dataset recording the detailed evolving physical proximity between students in a dormitory over a year, deterimined from wireless signals of their mobile devices.
\item Enron\footnote{\url{https://www.cs.cmu.edu/~./enron/}} is a communication network whose links are email communication between core employees of a cooperation over several years.
\item UCI\footnote{\url{http://konect.cc/networks/opsahl-ucforum/}} is a dataset recording online posts made by university students on a forum, but is non-attributed.
\item MOOC\footnote{\url{http://snap.stanford.edu/jodie/mooc.csv}} is a dataset of online courses where nodes represent students and course content units such as videos and problem sets, and links represent student's access behavior to a particular unit.
\end{itemize}

\subsection{Baselines, Implementation and Training Details}
\subsubsection{\caws and \cawa}
We first report the general training hyperparameters of our models in addition to those mentioned in the main text: on all datasets, we train both variants with mini-batch size 32 and set learning rate = $1.0\times10^{-4}$; the maximum training epoch number is 50 though in practice we observe that with early stopping we usually find the optimal epoch in fewer than 10 epochs; our early stopping strategy is that if the validation performance does not increase for more than 3 epoch then we stop and use the third previous epoch for testing; dropout layers with dropout probability = 0.1 are added to the RNN module, the MLP modules, and the self-attention pooling layer. Please refer to our code for more details. 

In terms of the three hyperparameters controlling CAW sampling, we discussed them in Sec \ref{subsec:ha}. For all datasets, they are systematically tuned with grid search, whose ranges are reported in  Tab.\ref{tab:search_range}.
\begin{table}[h]
\small \centering
\begin{tabular}{lccc}
\hline
\textbf{Dataset} & \textbf{Sampling number $M$} & \textbf{Time decay $\alpha$}                & \textbf{Walk length $m$} \\ \hline
Reddit      & 32, 64, 128 & \{0.25, 0.5, 1.0, 2.0, 4.0\}$\times10^{-5}$ & 1, 2, 3, 4 \\
Wikipedia        & 32, 64, 128                 & \{0.25, 0.5, 1.0, 2.0, 4.0\}$\times10^{-6}$ & 2, 3, 4                  \\
MOOC        & 32, 64, 128 & \{0.25, 0.5, 1.0, 2.0, 4.0\}$\times10^{-6}$ &     2, 3, 4, 5       \\ 
Social Evo. & 32, 64, 128 & \{0.25, 0.5, 1.0, 2.0, 4.0, 8.0\}$\times10^{-6}$ & 1, 2, 3    \\
Enron       & 32, 64, 128 & \{0.25, 0.5, 1.0, 2.0, 4.0\}$\times10^{-7}$ & 1, 2, 3, 4 \\
UCI              & 32, 64, 128                  & \{0.6, 0.8, 1.0, 1.2, 1.4\}$\times10^{-5}$  & 1, 2, 3                  \\ \hline
\end{tabular}
\caption{Hyperparameter search range of CAW sampling.}
\label{tab:search_range}
\end{table}

Apart from the hyperparameters controlling CAW sampling, hidden dimensions of CAW-N mentioned in Sec.\ref{subsec:neural_encoding}, including that of the various encodings, MLPs, RNN, and attention projection matrices, are relatively less tuned. We select them based on two principles: 1) when node \& link attributes are available, dimension of all these modules are set to have the same dimensions as baselines; 2) when node \& link attributes are unavailable,  the dimensions are picked from {32, 64, 128}, whichever leads to a better performance.

\subsubsection{Baselines} \label{adp:baseline} 
We list the introduction of the six baselines as follows:
\begin{itemize}
    \item DynAERNN~\citep{goyal2020dyngraph2vec} uses a fully connected encoder to acquire network representations, passes them into LSTM and uses a fully connected network to decode the future network structures.
    \item JODIE~\citep{kumar2019predicting} applies RNNs to estimate the future embedding of nodes. The model was proposed for bipartite graphs while we properly modify it for standard graphs if the input graphs are non-bipartite.
    \item DyRep~\citep{trivedi2019dyrep} also uses RNNs to learn node embedding while its loss function is built upon temporal point process.
    \item VGRNN~\citep{hajiramezanali2019variational} generalizes the variational GAE (\cite{kipf2016variational}) to temporal graphs, which makes the prior depend on the historical dynamics and captures those dynamics with RNNs.
    \item EvolveGCN~\citep{pareja2020evolvegcn} uses a RNN to estimate the GCN parameters for the future snapshots.
    \item TGAT~\citep{xu2020inductive} leverages GAT to extract node representations where the nodes' neighbors are sampled from the history and encodes temporal information via Eq.\ref{eq:randomf}.

\end{itemize}
We introduce how we tune these baselines as follows. 

\xhdr{DynAERNN}
The model with code provided \href{https://github.com/palash1992/DynamicGEM}{\textbf{here}} is adapted into our evaluation pipeline. We follow most of the settings in the code. We tune the embedding size in \{32, 64\} and lookback in \{2, 3, 5\} to report the best performance.

\xhdr{JODIE}
The model with code provided \href{https://github.com/srijankr/jodie}{\textbf{here}} is adapted into our evaluation pipeline. JODIE calculates the $L_2$ distances between the predicted item embedding to other items and uses the rankings to evaluate their performance. Here, we consider the negative distances as the prediction score. Based on the prediction score, we calculate mAP and AUC. We split the data according to the setting in section \ref{section:setting}. The model is trained for 50 epoches. The dimensions of the dynamic embedding is searched in\{64, 128\} and the best performance is reported.

\xhdr{DyRep}
The model with code provided \href{https://github.com/uoguelph-mlrg/LDG}{\textbf{here}} is adapted into our evaluation pipeline. We follow most of the settings in the paper. That is, we set the number of samples for survival to 5, gradient clipping to 100. And we tune the hidden unit size and embedding size in \{32, 64\} to report the best performance. The model uses likelihood based on point process to predict links and therefore we use these likelihood scores to compute AUC and AP.  

\xhdr{VGRNN} The model with code provided \href{https://github.com/VGraphRNN/VGRNN}{\textbf{here}} is adapted into our evaluation pipeline. We use several of its default settings: one-hot node features as input when node attributes are unavailable as suggested by the original paper~\citep{hajiramezanali2019variational}, one layer of GRU network as the history tracking backbone, a learning rate of 1e-2, and training for 1000 epochs. Its hidden dimension is searched in \{32, 64\} and the best performance is reported.

\xhdr{EvolveGCN}
The model with code provided \href{https://github.com/IBM/EvolveGCN}{\textbf{here}} is adapted into our evaluation pipeline. We utilize EvolveGCN-O version since it can capture more graph structural information. For most hyperparameters, we follow the default values. According to our setting, we sample an equal amount of negative links, which means we set negative\_mult\_training and negative\_mult\_test to 1. One central hyperparameter needs to be further tuned is number of previous snapshots used for training and testing. We search its optimal value in \{3, 4, 6, 8, 10\} when tuning the model for most datasets. Since Enron only contains 11 snapshots, we search its optimal value in \{3,4,5,6\}.

\xhdr{TGAT} The model with code provided \href{https://openreview.net/forum?id=rJeW1yHYwH}{\textbf{here}} is adapted into our evaluation pipeline. We use several of their default settings. That is, we use product attention, set the number of attention heads to 2, set the number of graph attention layers to 2, and use 100 as their default hidden dimension. One central hyperparameter that needs to be further tuned is the degree of their neighbor sampling. We search its optimal value in \{10, 20, 30\} when tuning the model. 

\subsection{Evaluation of Snapshot-based Baselines}
\begin{table}[h]
\centering
\resizebox{\textwidth}{!}{%
\begin{tabular}{l|cccccc}
\hline
\textbf{} & \textbf{Reddit}   & \textbf{Wikipedia} & \textbf{MOOC} & \textbf{Social Evo.} & \textbf{Enron}  & \textbf{UCI}    \\ \hline
total snapshots    & 174 & 20 & 20 & 27 & 11  &  88 \\
exact split     &  122 / 26 / 26 & 14 / 3 / 3 & 14 / 3 / 3  & 19 / 4 / 4  & 7 / 2 / 2 & 62 / 13 / 13  \\
referenced baseline           & EvolveGCN & - & - & VGRNN & VGRNN & EvolveGCN \\
\hline
\end{tabular} 
}
\caption{Snapshot split for evaluating snapshot-based baselines.}
\label{tab:snapshot_split}
\end{table}

We make the following decisions to evaluate snapshot-based baselines in a fair manner, so that their performances are comparable to those derived from the stream-based evaluation procedure. The first step we do is to evenly split the whole dataset chronologically into a number of snapshots. We determine the exact number of snapshots by referring the three snapshot-based baselines we use.  For Wikipedia and MOOC dataset which are not used by any snapshot-based baseline, we split them into a total of 20 snapshots. Next, we need to determine the proportions of these snapshots assigned each to training, validation, and testing set. In doing this, our principle is that the proportions of these three sets should be close to 70:15:15 as much as possible, since that ratio is what we use for evaluating stream-based baselines and our proposed method. These decisions lead to our final splitting scheme summarized in Tab. \ref{tab:snapshot_split}. 

Extra care should also be taken when testing snapshot-based methods. For a queried link in a snapshot, usually snapshot-based methods only make a binary prediction whether or not that link may exist at any time in that snapshot. They do not, however, take care of the case that the link may appear multiple times at different time points within that snapshot's time range. This lead to a different evaluation scheme than stream-based methods, which do consider the multiplicity of links. Therefore, when testing snapshot-based methods, if a link appear in a certain snapshot for multiple times, we record the model's prediction the same number of times before computing its performance metrics.

\subsection{Choice of Evaluation Metric}
When considering link prediction as a binary classification problem, the existing literature usually choose metrics from the following: Area Under the ROC Curve (AUC), Average Precision (AP), and Accuracy (ACC). The reason we do not use ACC is that a proper confidence threshold of decision is ill-defined in literature, which leads to unfair comparison across different works.

\subsection{Computing Infrastructure}\label{subsec:infras}
All the experiments were carried out on a Ubuntu 16.04 server with Xeon Gold 6148 2.4 GHz 40-core CPU, Nvidia 2080 Ti RTX 11GB GPU, and 768 GB memory.
\section{Additional Experimental Results}
\subsection{Performance in Average Precision}\label{subsec:ap_performance}
\begin{table}[H]
    \centering
    \resizebox{\columnwidth}{!}{%
    \begin{tabular}{c|c|l|llllll}
    \hline
    \multicolumn{2}{c|}{Task} & Methods & \multicolumn{1}{c}{Reddit} & \multicolumn{1}{c}{Wikipedia} & \multicolumn{1}{c}{MOOC} & \multicolumn{1}{c}{Social Evo.} & \multicolumn{1}{c}{Enron} & \multicolumn{1}{c}{UCI}\\
    \hline
    \multirow{16}{*}{\rotatebox[origin=c]{90}{Inductive}}
    &\multirow{8}{*}{\rotatebox[origin=c]{90}{new v.s. new}}
    & DynAERNN & 58.63 $\pm$ 5.42 & 54.94 $\pm$ 2.29 & 59.84 $\pm$ 1.26 & 54.76 $\pm$ 1.33 & 54.89 $\pm$ 3.79 & 51.59 $\pm$ 3.92\\
    && JODIE & 80.03 $\pm$ 0.13 & 76.90 $\pm$ 0.49 & 82.27 $\pm$ 0.46$^\dagger$ & 87.96 $\pm$ 0.12$^\dagger$ & 79.80 $\pm$ 1.48$^\dagger$ & 71.64 $\pm$ 0.62\\
    && DyRep & 61.28 $\pm$ 1.89 & 57.57 $\pm$ 2.56 & 62.29 $\pm$ 2.09 & 75.42 $\pm$ 0.32 & 69.97 $\pm$ 0.92 & 63.08 $\pm$ 7.40\\
    && VGRNN &  60.64 $\pm$ 0.68 & 52.55 $\pm$ 0.82 & 65.44 $\pm$ 0.82 & 67.83 $\pm$ 0.53 & 67.93 $\pm$ 0.88 & 67.50 $\pm$ 0.92\\
    && EvolveGCN & 62.99 $\pm$ 0.17 & 55.64 $\pm$ 1.03 & 52.28 $\pm$ 1.80 & 52.26 $\pm$ 1.16 & 47.36 $\pm$ 1.24 & 80.98 $\pm$ 1.09$^\dagger$\\
    && TGAT & 95.17 $\pm$ 0.91$^\dagger$ & 93.18 $\pm$ 0.73$^\dagger$ & 72.91 $\pm$ 0.92 & 52.17$\pm$ 1.94 & 63.83 $\pm$ 3.70 & 75.27 $\pm$ 2.34 \\
    && \textbf{\caws} & \textbf{\bl{97.90 $\pm$ 0.08$^\ast$}} & \textbf{\bl{99.35 $\pm$ 0.15$^\ast$}} & \bl{\textbf{90.70 $\pm$ 0.12$^\ast$}} & \bl{\textbf{95.70 $\pm$ 0.23$^\ast$}} & \textbf{\bl{96.92 $\pm$ 0.34}}$^\ast$ & \textbf{\bl{95.19 $\pm$ 0.21}}$^\ast$\\
    && \textbf{\cawa} & \textbf{\bl{97.95 $\pm$ 0.09$^\ast$}} & \textbf{\bl{99.34 $\pm$ 0.25$^\ast$}}& \bl{\textbf{90.75 $\pm$ 0.11$^\ast$}} & \bl{\textbf{95.35 $\pm$ 0.36$^\ast$}} & \textbf{\bl{96.53 $\pm$ 0.28$^\ast$}} & \textbf{\bl{95.18 $\pm$ 0.21}}$^\ast$\\
    \cline{2-9}
    
    &\multirow{8}{*}{\rotatebox[origin=c]{90}{new v.s. old}}
    & DynAERNN & 66.59 $\pm$ 2.90 & 63.76 $\pm$ 2.82 & 82.02 $\pm$ 1.59 & 52.54 $\pm$ 0.22 & 55.50 $\pm$ 2.07 & 57.29 $\pm$ 2.52\\
    && JODIE & 83.15 $\pm$ 0.03 & 80.54 $\pm$ 0.06 & 87.95 $\pm$ 0.08 & 91.40 $\pm$ 0.04$^\dagger$ & 89.57 $\pm$ 0.30 & 76.34 $\pm$ 0.17\\
    && DyRep & 66.73 $\pm$ 1.99 & 76.89 $\pm$ 0.31 & 88.25 $\pm$ 1.20$^\dagger$ & 89.41 $\pm$ 0.29 & 95.97 $\pm$ 0.28$^\dagger$ & 93.60 $\pm$ 1.47$^\dagger$\\
    && VGRNN & 52.84 $\pm$ 0.66 & 60.99 $\pm$ 0.55 & 62.95 $\pm$ 0.58 & 69.20 $\pm$ 0.52 & 67.93 $\pm$ 0.88 & 67.50 $\pm$ 0.92\\
    && EvolveGCN & 66.29 $\pm$ 0.52 & 53.82 $\pm$ 1.64 & 51.53 $\pm$ 0.92 & 52.01 $\pm$ 0.67 & 46.56 $\pm$ 1.89 & 76.30 $\pm$ 0.33\\
    && TGAT & 97.09 $\pm$ 0.18$^\dagger$ & 95.17 $\pm$ 0.15$^\dagger$ & 71.77 $\pm$ 0.23 & 52.48 $\pm$ 0.52 & 59.70 $\pm$ 1.49 & 75.01 $\pm$ 0.72\\
    && \textbf{\caws} & \textbf{\bl{97.21 $\pm$ 0.12$^\ast$}} & \textbf{\bl{98.87 $\pm$ 0.11$^\ast$}} & \bl{\textbf{91.40 $\pm$ 0.15}$^\ast$} & \bl{90.01 $\pm$ 0.10} & \bl{93.47 $\pm$ 0.29} & \bl{93.27 $\pm$ 0.42}\\
    && \textbf{\cawa} & \textbf{\bl{97.06 $\pm$ 0.13$^\ast$}} & \textbf{\bl{98.96 $\pm$ 0.11$^\ast$}} & \bl{\textbf{91.48 $\pm$ 0.19}$^\ast$} & \bl{90.03 $\pm$ 0.11} & \bl{94.01 $\pm$ 0.30} & \bl{\textbf{93.80 $\pm$ 0.30}}\\
    
    \hline
    \multicolumn{2}{c|}{\multirow{8}{*}{\rotatebox[origin=c]{90}{Transductive}}}
    & DynAERNN & 85.58 $\pm$ 2.12 & 76.58 $\pm$ 1.41 & 89.29 $\pm$ 0.49 & 66.58 $\pm$ 1.84 & 60.90 $\pm$ 2.70 & 84.95 $\pm$ 2.13\\
    \multicolumn{2}{c|}{}& JODIE & 91.14 $\pm$ 0.01 & 91.39 $\pm$ 0.04 & 91.19 $\pm$ 0.03$^\dagger$ & 89.22 $\pm$ 0.01 & 91.94 $\pm$ 0.01 & 80.27 $\pm$ 0.08\\
    \multicolumn{2}{c|}{}& DyRep & 67.54 $\pm$ 2.02 & 77.36 $\pm$ 0.25 & 90.49 $\pm$ 0.03 & 94.48 $\pm$ 0.01$^\dagger$ & 97.14 $\pm$ 0.07$^\dagger$ & 95.29 $\pm$ 0.13$^\dagger$\\
    \multicolumn{2}{c|}{}& VGRNN & 50.87 $\pm$ 0.81 & 67.66 $\pm$ 0.89 & 83.70 $\pm$ 0.56 & 78.66 $\pm$ 0.67 & 94.02 $\pm$ 0.52 &  82.23 $\pm$ 0.56\\
    \multicolumn{2}{c|}{}& EvolveGCN & 54.49 $\pm$ 0.73 & 55.84 $\pm$ 0.37 & 51.80 $\pm$ 0.46 & 56.90 $\pm$ 0.54 & 69.72 $\pm$ 0.49 & 81.63 $\pm$ 0.23\\
    \multicolumn{2}{c|}{}& TGAT & 98.38 $\pm$ 0.01$^\dagger$ & 96.65 $\pm$ 0.06$^\dagger$ & 69.75 $\pm$ 0.23 & 57.37 $\pm$ 1.18 & 57.37 $\pm$ 0.18 & 60.25 $\pm$ 0.31\\
    \multicolumn{2}{c|}{}& \textbf{\caws} & \bl{\textbf{98.72 $\pm$ 0.09$^\ast$}} & \textbf{\bl{98.82 $\pm$ 0.12$^\ast$}} & \textbf{\bl{94.50 $\pm$ 0.13$^\ast$}}& \bl{94.36 $\pm$ 0.18} & \bl{92.12 $\pm$ 0.09} & \bl{95.33 $\pm$ 0.30$^\ast$}\\
    \multicolumn{2}{c|}{}& \textbf{\cawa} & \bl{\textbf{98.80 $\pm$ 0.07$^\ast$}}& \textbf{\bl{98.84 $\pm$ 0.11$^\ast$}} & \textbf{\bl{94.55 $\pm$ 0.14$^\ast$}} &\bl{ 93.80 $\pm$ 0.16}& \bl{92.29 $\pm$ 0.09} & \bl{95.02 $\pm$ 0.14}\\
    \hline
    \end{tabular}
    }
    \caption{Performance in Average Precision (AP) (mean in percentage $\pm$ 95$\%$ confidence level.) $\dagger$ highlights the best baselines. $^\ast$, \textbf{bold font}, \textbf{bold font$^\ast$} respectively highlights the case where our models' performance exceeds the best baseline on average, by $70\%$ confidence, by $95\%$ confidence.}
    \label{tab:ap results}
    \vspace{-1mm}
\end{table}


\subsection{More Discussion on Stream-based vs. Snapshot-based Methods}\label{subsec:stream_snapshot}
Stream-based methods usually treat each temporal link as an individual training instance. In contrast, snapshot-based methods stack all the temporal links within a time slice into one static graph snapshot and do not further distinguish temporal order of links within that snapshot. In Tab. \ref{tab:auc results} and \ref{tab:ap results} we saw that stream-based methods generally exhibit better performance than snapshot-based methods. An important reason is that stream-based methods are able to access the few most recent interactions previous to the target link to predict. This makes them especially advantageous when used to model many common temporal networks whose dynamics are governed by some short-term laws. In contrast, snapshot-based methods are less able to access such immediate history, because they make prediction on all links within one future snapshot all at once. In principle they could alleviate this problem by making very fine-grained snapshots so that less immediate history is missed. However, this is not practical with real-word large temporal graphs whose temporal links come in millions, which leads to snapshot sequences of extreme length intractable to their recurrent neural structure. This issue was also observed by the recent work to model social interacting behaviors~\citep{wanggeneric}, although temporal convolutional networks may alleviate this issue to some extent. That said, snapshot-based methods usually has the advantage that they usually consume less memory and computation time. Stream-based methods on the other hand need to manage how they sample history very carefully to balance the efficiency and effectiveness. Our proposed algorithm on CAW sampling comes into the place in light of this to solve the problem.

\section{Visualizing CAWs and AWs}
Here, we introduce one way to visualize and interpret CAWs. Our interpretation can also illustrate the importance to capture the correlation between walks to represent the dynamics of temporal networks, where the set-based anonymization of CAWs can work while AWs will fail. The basic idea of our intrepretation is to identify different shapes of CAWs via their patterns encoded in $I_{CAW}$, and compare their contributions to the link-prediction confidence. The idea will be also used to interpret AWs so that we can compare CAWs with AWs.

First, we define the shapes of walks based on $I_{CAW}$. Recall from Eq. \ref{eq:I-CAW} that $g(w, S_u)$ encodes the number of times node $w$ that appears in different walk positions w.r.t source node $u$. This encoding induces a temporal shortest-path distance $d_{uw}$ between node $w$ and $u$: $ d_{uw} \triangleq \text{min}\{i | g(w, S_{u})[i]>0\}$. Note that in temporal networks, there is not a canonical way to define shortest-path distance between two nodes as there is no static structures. So our definition $d_{uw}$ can be viewed the shortest-path distance between u and w over the subgraph that consists of walks in $S_{u}$. Based on the way to define $(d_{uw}, d_{vw})$, we introduce the mapping from $I_{CAW}$ of the node $w$ to a coordinate of this node in the subgraph that consists of walks in $S_{u}\cap S_{v}$: $(g(w, S_u), g(w, S_v)) \xrightarrow[]{} \text{coor}(w; u, v) = (d_{uw}, d_{vw})$. This cooredinate can be viewed as a relative coordinate of node $w$ w.r.t. the source nodes $u$, $v$. Each walk $W \in S_u \cup S_v$ can then represented as a sequence of such coordinates by mapping each node's $I_{CAW}$ to a coordinate. The obtained sequence can be viewed as a shape of $W$ and we denote the obtained shape as $s_{CAW}(W)$. For instance, in the toy example shown by Fig. \ref{fig:CAW} right, the first CAW in $S_u$, $u\xrightarrow[]{} b\xrightarrow[]{} a \xrightarrow[]{} c$ is mapped to a new coordinate sequence $(0, 2)\xrightarrow[]{} (1, 2)\xrightarrow[]{} (2, \infty)\xrightarrow[]{}(3, \infty)$. The $\infty$ marks the setting that node $a$, $c$ do not appear in $S_v$. 

Next, we score the contributions of CAWs with different shapes. We use CAW-N-Mean as enc($S_u\cup S_v$) is simply mean over the encodings of sampled CAWs and further use linear projrepresented as a sequence of such coordinates by mapping each node’section $\beta^T \text{enc($S_u\cup S_v$)}$ to compute the final scalar logit for prediction. As the two operations mean and projection are commutative, the above setting allows each CAW $\hat{W}_i$ contributing a scalar score $logit(\hat{W}_i) =\beta^T \text{enc}(\hat{W}_i)$ to the final logit.


\begin{figure}[t]
    \centering
    \includegraphics[scale=0.5]{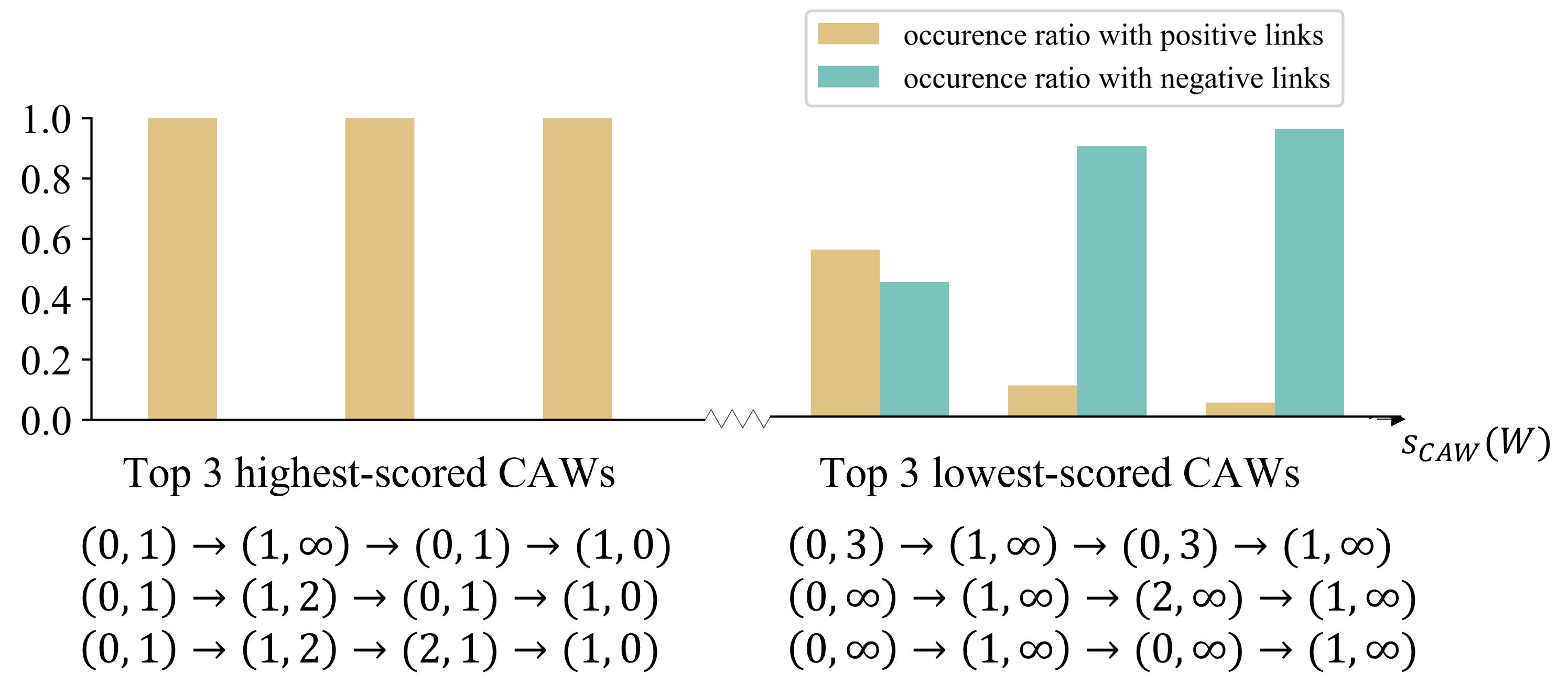}
    \caption{\small \bl{Visualizing most discriminatory CAWs, and their occurrence ratios with positive / negative samples.}}
    \label{fig:vis_caw}
\end{figure}

Fig. \ref{fig:vis_caw} lists the 3 highest-scored and the 3 lowest-scored shapes of CAW, which are extracted from the Wikipedia dataset with $M=32$ and $m=3$. A law of general motif closure can be observed from the highest-scored CAWs: two nodes that commonly appear in some types of motif are more inclined to have a link in between. For example, the highest-scored shape of CAW, $(0,\infty)\rightarrow (1,2)\rightarrow (2,3)\rightarrow (1,2)$, implies that the nodes except the first in this CAW appear in the sampled common 3-hop neighborhood around the two nodes between which the link is to be predicted. Therefore, CAW-N essentially adaptively samples a temporal motif closure pattern that is very informative to predict this link. CAW-N does not explicitly enumerating or counting these motif patterns. In contrast, when CAWs do not bridge the two nodes, as shown in top-2 lowest-scored CAWs, very unlikely there will exist a link. Fig. \ref{fig:vis_caw} also displays each of the 6 CAW’s occurrence ratio with positive and negative links. The difference within each pair of ratios is an indicator of the corresponding CAW’s discriminatory power. We also see that the discriminatory power of CAWs is very strong: highest-scored CAWs almost never occur with negative links, and lowest-scored CAWs also seldom occur with positive links. 

\begin{figure}[t]
    \centering
    \includegraphics[scale=0.5]{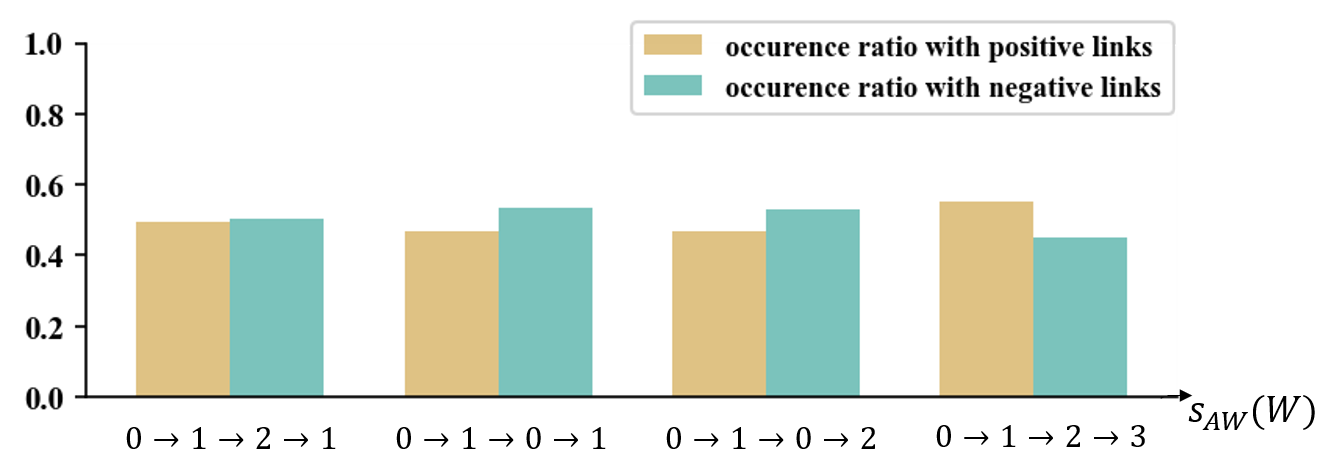}
    \caption{\small Visualizing \textit{all} AWs, and their occurrence ratios with positive / negative samples.}
    \label{fig:vis_aw}
\end{figure}

We further apply the similar procedure to analyze the AWs introduced in Sec. \ref{subsec:aw} and the model Ab.5 of Tab. \ref{tab:ablation} used for ablation study. Note that AW cannot establish the correlation between walks, each single AW itself, say $W=(v_0, v_1, ..., v_m)$, decides its own shape $s_{AW}(W)$. We directly set $s_{AW}(W) = I_{AW}(v_0;W)\rightarrow I_{AW}(v_2;W)\rightarrow\cdots \rightarrow I_{AW}(v_m;W)$  with $I_{AW}(w; W)$ defined in Eq. \ref{eq:AW}. As the Wikipedia dataset is a bipartite graph, there are in total four different shapes when $m=3$ as listed align with the x-axis of Fig. \ref{fig:vis_aw}. 
For illustration, we explain one shape of AW as an example, say $0\rightarrow 1\rightarrow2\rightarrow 1$: The corresponding walks have the second and the fourth nodes correspond to the same node, which the first, second and third nodes are different.   
As shown in Fig. \ref{fig:vis_aw}, we can see that AW's occurrence with positive versus negative links are highly mixed-up, compared to CAW's. That suggests that AWs possess significantly less discriminatory power than CAWs.  The main reason is that AWs do not have set-based anonymization, so they cannot capture the correlation between walks/motifs but CAWs can do that. This observation further gives a reason on why the model Ab.5 of Tab. \ref{tab:ablation} only achieves some performance on-par with the model Ab.2 where we totally remove the anonymization procedure: the anonymization adopted by AWs loses too much information of the structure and cannot benefit the prediction much. However, the original CAW-N well captures such information via the set-based anonymization.

\section{Comparison with the most recent work}
\begin{table}[t]
\centering
\resizebox{\columnwidth}{!}{%
\begin{tabular}{l|cccccc}
    \hline
    \multicolumn{1}{c|}{\textbf{Task}} & Reddit & Wikipedia & MOOC & Social Evo. & Enron & UCI\\
    \hline
    New vs. New & 98.25 $\pm$ 0.14 & \textbf{98.73 $\pm$ 0.12} & 86.94 $\pm$ 0.12 & \textbf{95.37 $\pm$ 0.04} & 70.51 $\pm$ 0.21 & 75.47 $\pm$ 0.08\\
    New vs. Old & 97.33 $\pm$ 0.07 & 97.06 $\pm$ 0.03 & 89.13 $\pm$ 0.79 & 93.35 $\pm$ 0.07 & 78.12 $\pm$ 0.23 & 80.53 $\pm$ 0.16\\
    Transductive & 98.68 $\pm$ 0.03 & 98.39 $\pm$ 0.05 & 90.83 $\pm$ 0.17 & \textbf{95.44 $\pm$ 0.08} & 87.56 $\pm$ 0.10 & 78.91 $\pm$ 0.08\\
    \hline
\end{tabular}}
\caption{TGN performance in AUC (mean in percentage $\pm$ 95$\%$ confidence level). \textbf{Bond font} highlights the case when TGN out performs CAW-N. TGN generally outperforms other baselines.}
\label{fig:tgn_performance_auc}
\end{table}

\begin{table}[t]
\centering
\resizebox{\columnwidth}{!}{%
\begin{tabular}{l|cccccc}
    \hline
    \multicolumn{1}{c|}{\textbf{Task}} & Reddit & Wikipedia & MOOC & Social Evo. & Enron & UCI\\
    \hline
    New vs. New & \textbf{98.13 $\pm$ 0.07} & \textbf{98.72} $\pm$ \textbf{0.12} & 84.96 $\pm$ 0.04 &\textbf{ 93.81 $\pm$ 0.11} & 72.81 $\pm$ 0.01 & 75.47 $\pm$ 0.08\\
    New vs. Old & 97.45 $\pm$ 0.08 & 97.29 $\pm$ 0.07 & 87.68 $\pm$ 1.44 & 90.79 $\pm$ 0.03 & 77.05 $\pm$ 0.03 & 80.53 $\pm$ 0.16 \\
    Transductive & 98.70 $\pm$ 0.08 & 98.47 $\pm$ 0.11 & 89.14 $\pm$ 0.05 & 93.54 $\pm$ 0.06 & 85.19 $\pm$ 0.08 & 78.91 $\pm$ 0.08 \\
    \hline
\end{tabular}}
\caption{TGN performance in AP (mean in percentage $\pm$ 95$\%$ confidence level). \textbf{Bond font} highlights the case when TGN out performs CAW-N. TGN generally outperforms other baselines.}
\label{fig:tgn_performance_ap}
\end{table}

Temporal Graph Networks(TGN)~\citep{rossi2020temporal} is a recent work that performs representation learning on temporal networks, which has been developed in parallel with this work. We were not aware of it until our paper was published. We would like to further conduct some experiments to compare TGN with CAW-N, in both inductive (New vs. New, New vs. Old) and transductive settings.

The implementation details of TGN experiments are as follows. We adapt the code of TGN code\footnote{\url{https://github.com/twitter-research/tgn}} into our evaluation pipeline. We use the the default architecture and hyperparameter reported by the code: the backbone is \textbf{TGN-attn} variant with memory updater; the number of heads for graph attention is 2; minibatch size is 200; the sampling number of temporal neighbors is 10; the dimension of time embedding is 100. The model is trained for 50 epochs with early stopping, and the learning rate is 0.0001.

We report TGN's AUC and AP scores in Tab.\ref{fig:tgn_performance_auc}, \ref{fig:tgn_performance_ap}. Compared with previous baselines, we observe that TGN's performance is generally better, and also shows smaller gaps between inductive and transductive settings. However, our CAW-N still outperforms TGN on most dataset, especially by significant margin on Enron and UCI.

In practice, we also note that TGN has the fastest execution speed among all baselines. It is also faster than CAW-N under its optimal hyperparameter setting. One possible reason is that TGN's message function is simple and its graph attention module requires only the first-hop  neighbors to be sampled. To further speed up the execution of our CAW-N method, we can parallelize the computation of encodings in each minibatch. Another direction is to reduce the complexity of CAW sampling: Our current implementation repeatedly samples a number of CAWs from scratch for each new temporal link. Alternatively, we may only maintain a dynamic list of sampled CAWs $S_v$ for each node $v$ as we proceed along the temporal link stream. Each time we encounter a new link $e$ that involves $v$, we update $S_v$ by dropping outdated walks and by adding in newly sampled CAWs that start from $e$. This would significantly reduce the redundancy of CAW sampling and thus speed up our method. Detailed implementation and evaluation are left for future study.

\section{\bl{Notes on Updated Experimental Results (Oct 16, 2022)}}\label{sec:update}
\bl{It has come to our notice there exists an error in our original implementation of $I_{CAW}$ as well as the attention Self-Att-AGG($S_u\cup S_v$). Fixing the error leads to numerical changes to some of the experimental results reported in the original paper: Tables  \ref{tab:auc results}, \ref{tab:ablation}, \ref{tab:ap results}, and Fig.\ref{fig:vis_caw}. Based on the correct implementation, we have updated the artifacts above and their relevant text . The correct implementation has also been updated to the GitHub repository at the original code link. Despite the change, the vast majority of the conclusions about CAWN remain unaffected.}

\bl{We would also like to extend our gratitude to the authors of \textit{Online Graph Nets} (\url{https://openreview.net/forum?id=0IqFsR9wJvI}), Hojin Kang, Jou-Hui Ho, Diego Mesquita, Jorge Pérez, Amauri H Souza, for helping us identify the aforementioned error.}
\end{document}